\theoremstyle{plain}
\newtheorem{theorem}{Theorem}[section]
\newtheorem{lemma}[theorem]{Lemma}
\theoremstyle{definition}
\theoremstyle{remark}
\newcommand{\Nb}{\mathbb{N}}
\newcommand{\Rb}{\mathbb{R}}
\newcommand{\Cb}{\mathbb{C}}
\newcommand{\new}[1]{\emph{#1}}
\renewcommand{\vec}[1]{\mathbold{#1}}
\DeclarePairedDelimiterX{\norm}[1]{\lVert}{\rVert}{#1}
\DeclareFontFamily{U}{mathx}{\hyphenchar\font45}
\DeclareFontShape{U}{mathx}{m}{n}{<-> mathx10}{}
\DeclareSymbolFont{mathx}{U}{mathx}{m}{n}
\DeclareMathAccent{\widebar}{0}{mathx}{"73}
\DeclareMathAlphabet{\mathsfit}{\encodingdefault}{\sfdefault}{m}{sl}
\SetMathAlphabet{\mathsfit}{bold}{\encodingdefault}{\sfdefault}{bx}{n}
\definecolor{first}{HTML}{029E73}
\definecolor{second}{HTML}{0173B2}
\definecolor{third}{HTML}{D55E00}
\def\spectrum{{\boldsymbol{\Lambda}}}
\def\model{{v_{\vec{\theta}}}}
\newcommand{\spm}[1]{{\scriptstyle \pm {#1}}}
\title{Principled Latent Diffusion for Graphs via Laplacian Autoencoders}
\author{Antoine Siraudin, Christopher Morris \\
Faculty of Computer Science \\
RWTH Aachen University \\
Aachen, Germany \\
antoine.siraudin@log.rwth-aachen.de
}
\begin{document}

\maketitle

\begin{abstract}
Graph diffusion models achieve state-of-the-art performance in graph generation but suffer from quadratic complexity in the number of nodes---and much of their capacity is wasted modeling the absence of edges in sparse graphs. Inspired by latent diffusion in other modalities, a natural idea is to compress graphs into a low-dimensional latent space and perform diffusion in that space. However, unlike images or text, graph generation requires nearly lossless reconstruction, as even a single error in decoding an adjacency matrix can render the entire sample invalid. This challenge has remained largely unaddressed. We propose LG-Flow, a latent graph diffusion framework that directly overcomes these obstacles. A permutation-equivariant autoencoder maps nodes to fixed-dimensional embeddings that enable near-lossless reconstruction of both undirected graphs and DAGs. The dimensionality of this latent representation scales linearly with the number of nodes, thereby removing the quadratic adjacency-space bottleneck in the diffusion process and enabling the training of substantially larger generative backbones. In this latent space, we train a Diffusion Transformer with flow matching, enabling efficient and expressive graph generation. Our approach achieves competitive results against state-of-the-art graph diffusion models while delivering up to a $1000\times$ speed-up.
\end{abstract}

\section{Introduction}
Generative modeling of graphs has advanced rapidly in recent years, with diffusion models achieving state-of-the-art performance across various domains, including molecules~\citep{igashov2024retrobridge}, combinatorial optimization~\citep{sun2023difusco}, and neural architecture search~\citep{asthana2024multiconditioned}. However, most existing graph diffusion models operate directly in graph space, incurring quadratic computational complexity with respect to the number of nodes~\citep{vignac2022digress}. Such models typically learn both node and edge representations, since predicting edges solely from node features has proven insufficient across benchmarks~\citep{qin2023sparse}. As a result, scalability to even medium-sized graphs remains limited. Quadratic complexity also leads to disproportionate memory consumption relative to the number of parameters, thereby restricting model capacity compared to image-based diffusion models. In addition, attempts to adapt scalable autoregressive sequence models to graphs break permutation equivariance when graphs are linearized (e.g., \citet{chengraph}), thereby limiting their applicability. 

In contrast, latent diffusion has advanced image generation by shifting the generative process into a compressed latent space~\citep{esser2024scaling,rombach2022high}. A \new{Variational Auto-Encoder} (VAE) maps data into low-dimensional representations, where Gaussian diffusion or flow models operate more efficiently, avoiding wasted capacity on imperceptible details. This yields efficient training, faster inference, and supports high-capacity architectures.

Adapting this principle to graphs is highly appealing but presents unique challenges. Unlike in images, where minor decoding errors are visually negligible, a single mistake in reconstructing an adjacency matrix can destroy structural validity---for instance, by deleting a chemical bond or misclassifying its type. Compression must therefore not come at the expense of reconstruction fidelity. The decoder must achieve near-lossless reconstruction to guarantee valid samples. Unfortunately, graph autoencoders are rarely evaluated for reconstruction quality~\citep{kipf2016variational,lee2022mgcvae,li2020dirichlet}, with the notable exception of \citet{boget2024discrete}. 

\begin{wraptable}{!}{0.25\linewidth}
    % \vspace{-0.7cm}
    \caption{Average density of training graphs across common benchmarks.}
    \resizebox{\linewidth}{!}{
    \begin{tabular}{l c}
    \toprule
        \textbf{Dataset} & \textbf{Density} \\
        \midrule
        Planar & 0.09 \\
        Tree & 0.03 \\
        Ego & 0.04 \\
        Moses & 0.11 \\
        Guacamol & 0.09 \\
        TPU Tile & 0.06 \\
    \bottomrule
    \end{tabular}}
    \label{tab:densities}
\end{wraptable}

Despite these obstacles, the latent diffusion paradigm is particularly promising for graphs. Most real-world graphs are sparse (see \Cref{tab:densities}), which means diffusion models trained in graph space devote most of their capacity to modeling the absence of edges. A latent approach can alleviate this inefficiency, reduce quadratic complexity in training, and enable training of larger, more expressive generative models. This motivates the design of latent graph representations whose size---understood as the total number of dimensions---scales linearly with the number of nodes, for example, by assigning each node a fixed-dimensional latent vector.

In addition, latent diffusion offers modularity and reusability, i.e., instead of designing graph-specific diffusion models, one can employ generic denoising architectures and reuse methods and optimization techniques from the thriving image generation community, such as the \new{Diffusion Transformer} (DiT)~\citep{peebles2023scalable}. Once the encoder is trained, the diffusion stage is uniform across modalities (see, e.g., \cite{joshiall}). In the graph domain, this approach unifies undirected graph~\citep{qinmadeira2024defog,vignac2022digress} and \new{directed acyclic graph} (DAG) generation~\citep{carballo2025directo,li2024layerdag}, which are typically treated separately. The latter is especially relevant to high-impact applications such as chip design and circuit synthesis~\citep{iwls2023,li2025circuit,phothilimthana2023tpugraphs}. This modular view shifts the focus to designing strong, cheaper-to-train graph autoencoders, which rely on linear-complexity \new{(message-passing) graph neural networks} (MPNNs)~\citep{Gil+2017,Sca+2009}.

From these considerations, we derive four requirements for a principled latent graph diffusion model.  
\textbf{(R1)} The autoencoder should be guided by reconstruction-oriented principles and achieve high empirical reconstruction fidelity.  
\textbf{(R2)} The size of a graph's latent representations should scale linearly with the number of nodes, improving scalability over previous graph diffusion models.
\textbf{(R3)} Graph-specific inductive biases should be encapsulated in the autoencoder, allowing the diffusion model itself to remain generic.  
\textbf{(R4)} The overall framework should maintain competitive generative performance while avoiding the quadratic complexity of graph-space diffusion.  

However, as argued above, existing latent graph diffusion models fail to satisfy all four requirements. Most lack reconstruction guarantees~\citep{yang2024graphusion,fu2024hyperbolic}, while others forfeit efficiency by relying on ad-hoc transformer designs~\citep{nguyen2024glad} or a quadratic number of edge tokens~\citep{zhou2024unifying}. 

\paragraph{Present work}
Hence, to address the above requirements \textbf{R1}-\textbf{R4}, we propose a latent graph diffusion framework. Concretely, we
\begin{enumerate} 
    \item derive the \new{Laplacian Graph Variational Autoencoder} (LG-VAE), a principled permutation-equivariant graph autoencoder, motivated by theory from positional encodings in \new{Graph Transformers} (GTs) to both undirected graphs and DAGs.
    \item Our autoencoder maps each node to a fixed-dimensional embedding, enabling high-fidelity adjacency reconstruction and yielding compact latent representations that scale linearly with graph size.
    \item In this latent space, we apply flow matching with a DiT architecture, achieving competitive
    performance across benchmarks while improving inference efficiency and architectural simplicity. Our approach (LG-Flow) delivers competitive results on both synthetic and real-world benchmarks, while achieving speed-ups ranging from $10\times$ to $1000\times$.
\end{enumerate}

\emph{Overall, LG-Flow shows that moving graph diffusion into a high-fidelity node-latent space can substantially improve inference efficiency in the small-to-medium graph regime, paving the way toward more scalable latent generative models for graphs.}

\section{Related work}

Here, we discuss related work. A more detailed discussion is provided in \Cref{app:related}, including an overview of prior graph diffusion models.

\paragraph{Graph generation}
Graph generation methods are typically categorized into two main groups. \emph{Autoregressive models} build graphs by progressively adding nodes and edges~\citep{you2018graphrnn,jangsimple}. Their main advantage is computational efficiency, since they do not need to materialize the full adjacency matrix. However, they depend on an arbitrary node ordering to transform the graph into a sequence, either learned or defined through complex algorithms, which breaks permutation equivariance. In contrast, \emph{one-shot models} generate the entire graph at once, thereby preserving permutation equivariance. Many approaches have been explored in this class, starting with GVAEs~\citep{kipf2016variational} and GANs~\citep{martinkus2022spectre}. Following their success in other modalities such as images, diffusion models~\citep{ho2020denoising,songscore, lipmanflow} quickly emerged as the dominant approach for graph generation.
\begin{figure}[t]
    \centering
    \includegraphics[width=\textwidth]{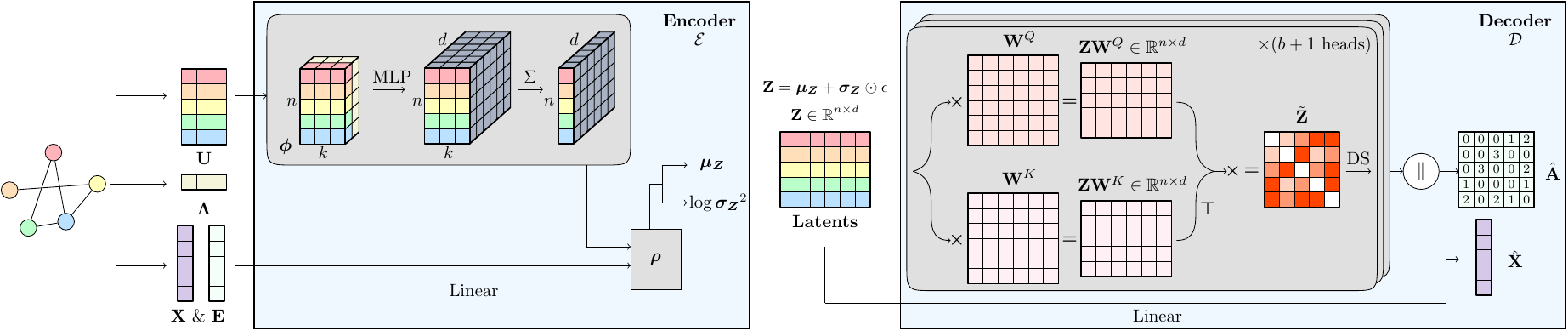}
    \caption{Overview of the LG-VAE. The encoder $\mathcal{E}$ (left) encodes structure via $\phi$ and node/edge labels, then aggregates them with $\rho$. Latents $\vec{Z}$ are sampled using the reparameterization trick and passed to the decoder $\mathcal{D}$ (right). Node labels are decoded directly, while adjacency is decoded by (a) computing scores via bilinear dot products for $b+1$ heads, (b) processing scores with a row-wise DeepSet, and (c) concatenating outputs across heads. The final adjacency $\hat{\vec{A}}$ is obtained with an argmax.}
    \label{fig:ae}
    \vspace{-0.3cm}
\end{figure}
\paragraph{Latent diffusion models}
Latent diffusion models (LDMs)~\citep{rombach2022high} generate data by applying the diffusion process in a compressed latent space rather than pixel space, typically using a VAE for encoding and decoding. This reduces computation while preserving detail, and underlies models such as Stable Diffusion. More recently, DiTs~\citep{peebles2023scalable} emerged as effective denoisers across modalities~\citep{esser2024scaling,joshiall}. For graphs, however, latent diffusion has not matched the performance of discrete diffusion models~\citep{fu2024hyperbolic,nguyen2024glad,10508504}. Only \cite{nguyen2024glad} evaluate autoencoder reconstruction, and only on small QM9 graphs, with an ad-hoc transformer that materializes the full attention matrix, reducing efficiency. Similarly, \cite{zhou2024unifying} achieves competitive results on MOSES but relies on $n^2$ edge tokens from the full adjacency matrix, also limiting efficiency. Other works target only molecule generation~\citep{ketatalift,10.1093/bib/bbae142,10.1145/3627673.3679547} or tasks like link prediction~\citep{fu2024hyperbolic} and regression~\citep{zhou2024unifying}. In contrast, our approach ensures the adjacency matrix is provably recoverable from the latent space.

\section{Background}

In this section, we introduce theoretical tools used throughout the paper, including graphs, adjacency-identifying positional encodings, and Laplacian positional encodings. For completeness, the formal introduction of flow matching is deferred to \Cref{app:back}. Note that since flow matching and Gaussian diffusion are essentially equivalent, we use the two terms interchangeably~\citep{gao2025diffusionmeetsflow}.

\paragraph{Notations}
For \(n \in \mathbb{N}\), let \([n]=\{1,\ldots,n\}\). We consider graphs \(G=(V(G),E(G))\) with at most \(n\) nodes and \(m\) edges, where \(V(G)\) is the node set and \(E(G)\) the edge set. Graphs may have discrete node labels in \([a]\) and edge labels in \([b]\), represented by \(\vec{X}\in[a]^n\) and \(\vec{E}\in[b]^m\), respectively. After fixing an arbitrary node ordering, we denote the adjacency matrix by \(\vec{A}\in\{0,1\}^{n\times n}\), where \(A_{ij}=1\) indicates an edge from \(i\) to \(j\). We also consider directed acyclic graphs (DAGs), for which \(A_{ij}=1\) denotes a directed edge \(i\to j\). For a real matrix \(\vec{M}\), \(\vec{M}^\top\) denotes its transpose; for a complex matrix \(\vec{M}\), \(\vec{M}^*\) denotes its Hermitian transpose. For undirected graphs, the Laplacian is \(\vec{L} \coloneqq \vec{D} - \vec{A}\), with eigendecomposition \(\vec{L} \coloneqq \vec{U}\spectrum \vec{U}^\top\), where \(\vec{U}\in\mathbb{R}^{n\times n}\) contains the eigenvectors and \(\spectrum \coloneqq (\lambda_1, \dots, \lambda_n)\) the corresponding eigenvalues. Additional notation is deferred to \Cref{app:not}.

\subsection{Adjacency-identifying positional encodings}
\label{sec:aip}
The design of our autoencoder extends \emph{adjacency-identifying positional encodings}, introduced in the literature on positional encodings for GTs~\citep{muller2024aligning}. Intuitively, such encodings guarantee that the adjacency matrix of an \emph{undirected} graph can be reconstructed by the attention mechanism from node embeddings, provided the positional encoding is chosen appropriately.  

Formally, let $G$ be a directed graph $(V(G), E(G))$ with $n$ nodes, and let $\vec{P} \in \mathbb{R}^{n \times d}$ denote a matrix of $d$-dimensional node embeddings. Define
\begin{equation}\label{eq:ptilde}
    \tilde{\vec{P}} \coloneqq \frac{1}{\sqrt{d}} \vec{P} \vec{W}^Q \left( \vec{P} \vec{W}^K \right)^\top,
\end{equation}
where $\vec{W}^Q, \vec{W}^K \in \mathbb{R}^{d \times d}$. We say that $\vec{P}$ is \emph{adjacency-identifying} if there exist $\vec{W}^Q, \vec{W}^K$ such that for every row $i$ and column $j$, $\tilde{\vec{P}}_{ij} = \max_k \tilde{\vec{P}}_{ik} \iff \vec{A}(G)_{ij} = 1$, where $\vec{A}(G)$ denotes the adjacency matrix of $G$. Note that in connected graphs, each row has between one and $(n-1)$ maxima.  We further say that a matrix $\vec{Q} \in \mathbb{R}^{n \times d}$ is \emph{sufficiently adjacency-identifying} if it approximates an adjacency-identifying matrix $\vec{P}$ to arbitrary precision, i.e., $\left\| \vec{P} - \vec{Q} \right\|_{\text{F}} < \epsilon,$ for all $\epsilon > 0$. 

\paragraph{LPE} The \emph{Laplacian Positional Encoding} (LPE), introduced in~\cite{dwivedi2022graph,muller2024aligning}, is an example of a sufficiently adjacency-identifying positional encoding, constructed as follows. Let $\epsilon \in \mathbb{R}^l$ be a learnable vector initialized to zero. Let $\phi \colon \mathbb{R}^2 \to \mathbb{R}^d$ denote a feedforward network, applied row-wise, and let $\rho \colon \mathbb{R}^{n \times d} \to \mathbb{R}^d$ denote a permutation-equivariant neural network applied row-wise. For an eigenvector matrix $\vec{U}$ of the graph Laplacian and its corresponding eigenvalues $\spectrum$, we define
\begin{equation*}\label{eq:phi}
    \phi(\vec{U}, \spectrum) \coloneqq 
    \Big[\,\phi \big(\vec{U}_{1} \,\|\, \spectrum + \epsilon \big), \ldots, \phi \big(\vec{U}_{n} \,\|\, \spectrum + \epsilon \big) \,\Big] \in \mathbb{R}^{n \times n \times d},
\end{equation*}
where $\vec{U}_{i} \,\|\, \spectrum + \epsilon \in \mathbb{R}^{n \times 2}$ denotes row-wise concatenation. The network $\phi$ is applied on the last dimension, which yields $\phi(\vec{U}_{i} \,\|\, \spectrum + \epsilon) \in \mathbb{R}^{n \times d}$, for each $i \in [n]$. Stacking these outputs over all nodes produces $\phi(\vec{U}, \spectrum)$.  

The LPE is then obtained by aggregating contributions across all eigenvectors and eigenvalues. Specifically,
\begin{equation*}\label{eq:LPE}
    \mathrm{LPE}(\vec{U}, \spectrum) \coloneqq \rho\!\left( \sum_{i=1}^n \phi(\vec{U}, \spectrum)_{i}^{\top} \right),
\end{equation*}
where $\phi(\vec{U}, \spectrum)_{i}^{\top} \in \mathbb{R}^{n \times d}$ denotes the $i$-th column of $\phi(\vec{U}, \spectrum) \in \mathbb{R}^{n \times n \times d}$. Intuitively, this means that for each node we combine the information from all  Laplacian (linearly independent) eigenvectors and eigenvalues, and then apply a DeepSet~\citep{zaheer2017deep} over the (multi)set $\{\!\!\{\phi(\vec{U}_{i} \,\|\, \spectrum + \epsilon)\}\!\!\}_{i=1}^n$.

\section{Principled Latent Diffusion for Graphs}\label{sec:method}

We now present our latent diffusion framework for graphs. The core idea is to design an autoencoder motivated by adjacency-identifying positional encodings and optimized for high-fidelity reconstruction (see \Cref{fig:ae} for an overview), that covers both undirected graphs and DAGs, and to train a diffusion model in the resulting latent space. An overview of the whole framework is provided in \Cref{fig:ldm}.

\subsection{Laplacian Graph Variational Autoencoder}\label{sec:lgvae}

In latent diffusion models, the expressive power of the decoder sets an upper bound on the quality of generated samples. For graphs, even a single incorrect edge can invalidate the entire structure, e.g., by breaking chemical validity in molecules, so high empirical reconstruction fidelity is essential. However, most existing graph autoencoders are not benchmarked for reconstruction accuracy, raising concerns about their reliability~\citep{fu2024hyperbolic,nguyen2024glad}.  

We address this by building on LPEs; see~\cref{sec:aip}. Our goal is to obtain node embeddings that retain sufficient structural information to enable adjacency reconstruction. The LPE is a natural choice: by construction, it provides embeddings that are \emph{sufficiently adjacency-identifying}, in the sense that, under the conditions of~\cref{sec:aip}, they contain the information required to recover the adjacency matrix. This property motivates our decoder design: adjacency reconstruction is modeled via a bilinear layer, as in~\cref{eq:ptilde}, followed by a row-wise argmax. Thus, the adjacency-identifying property should be understood as an architectural principle rather than a direct reconstruction guarantee for the learned autoencoder. This design addresses two of our requirements: (\textbf{R1}) a principled reconstruction-oriented autoencoder architecture and (\textbf{R2}) graph latent representations whose size scales linearly with the number of nodes.
 
Building on this foundation, we introduce the \emph{Laplacian Graph Variational Autoencoder} (LG-VAE). Following the VAE framework, the encoder $\mathcal{E}$ maps a graph $G$ with node features $\vec{X} \in \mathbb{R}^n$, edge features $\vec{E} \in \mathbb{R}^m$, and Laplacian $\vec{L} = \vec{U}\spectrum\vec{U}^\top$ to the parameters of a Gaussian posterior with mean $\boldsymbol{\mu_Z}$ and log-variance $\log \boldsymbol{\sigma_Z}^2$. Latents $\vec{Z} \in \mathbb{R}^{n \times d}$ are then sampled using the reparameterization trick and passed to the decoder $\mathcal{D}$, which reconstructs $\hat{G} = \mathcal{D}(\vec{Z})$; see \cref{fig:ae} for an overview. 

\paragraph{Encoder}  
The encoder extends the LPE to incorporate node and edge labels. We compute
\begin{equation*}\label{eq:encoder_embed}
    \vec{H}_{\phi} \coloneqq \sum_{i=1}^n \phi(\vec{U}, \spectrum)_{i}^\top \in \mathbb{R}^{n \times d}, \quad
    \vec{H}_{X} \coloneqq \vec{X}\vec{W}^{X} + \vec{b}^{X} \in \mathbb{R}^{n \times d}, \quad
    \vec{H}_{E} \coloneqq \vec{E}\vec{W}^{E} + \vec{b}^{E} \in \mathbb{R}^{m \times d},
\end{equation*}
where $\vec{W}^{X}, \vec{W}^{E}$ are projection matrices, and $\vec{b}^{X}, \vec{b}^{E}$ the associated bias, and feed them into $\rho$, yielding
\begin{align}\label{eq:encoder}
    \boldsymbol{\mu_Z}, \log \boldsymbol{\sigma_Z}^2 
    \coloneqq \mathcal{E}(G, \vec{X}, \vec{E}, \vec{U}, \spectrum) 
    = \rho(\vec{H}_{X} + \vec{H}_{\phi}, \vec{H}_{E}).
\end{align}
Here $\vec{H}_{\phi}$ is treated as an additional node feature, and $\rho$ can be any permutation-equivariant network. In our experiments, we use GIN~\citep{Xu+2018b} and GINE~\citep{hu2019strategies}. A latent sample is then obtained as
\begin{equation*}
    \vec{Z} = \boldsymbol{\mu_Z} + \boldsymbol{\sigma_Z} \odot \epsilon, 
\qquad \epsilon \sim \mathcal{N}(0, \vec{I}_d).
\end{equation*}
Note that, as detailed in \Cref{app:imp_ae}, and depicted in \Cref{fig:ae}, we only use the eigenvectors associated with the $k$ lowest eigenvalues, as using the full matrix eigenvectors might be impractical for large graphs.

\paragraph{Decoder}  
The decoder reconstructs both node and edge labels as well as the adjacency matrix. Node labels are predicted as
\begin{equation*}
    \hat{\vec{X}} \coloneqq \mathcal{D}_x(\vec{Z}) = \mathrm{softmax}(\vec{Z}\vec{W}^{D_x} + \vec{b}^{D_x}),
\quad \hat{\vec{X}} \in \mathbb{R}^{n \times a},
\end{equation*}
where $\vec{W}^{D_x} \in \mathbb{R}^{d \times a}$ is a projection matrix and $\vec{b}^{D_x}$ is a bias. To reconstruct the adjacency matrix, we first process $\vec{Z}$ to produce bilinear scores,
\begin{equation*}
    \tilde{\vec{Z}} \coloneqq \tfrac{1}{\sqrt{d}} \vec{Z}\vec{W}^Q(\vec{Z}\vec{W}^K)^\top \in \mathbb{R}^{n \times n}.
\end{equation*}
In principle, the adjacency-identifying property of the LPE motivates detecting edges by looking at multiple maxima over the rows of $\tilde{\vec{Z}}$, which is impractical. Instead, we treat the detection of maxima---and thus of edges---as a binary classification problem. Since rows may have variable lengths, as graphs have varying sizes, and since we need to preserve permutation equivariance, we instantiate this classifier as a DeepSet. It takes the rows of $\tilde{\vec{Z}}$ as inputs and outputs individual logits for each $\hat{\vec{A}}_{ij}$. Formally, our decoder can be written as 
\begin{align*} 
\hat{\vec{A}} & = \mathcal{D}_e(\vec{Z}) = \sigma\left(\mathrm{DeepSet}\left( \tilde{\vec{Z}}\right)\right), \quad \hat{\vec{A}} \in \mathbb{R}^{n \times n}, 
\end{align*} 
where $\mathrm{DeepSet} \colon  \Rb^{n} \rightarrow \Rb^{n}$ is applied row-wise and $\sigma(\cdot)$ is a point-wise applied sigmoidal function. To decode edge labels along with the graph structure, we extend the bilinear layer to a multi-headed version; see~\cref{app:vae} for details.

\paragraph{Loss function}  
Following standard practice in VAEs, the training objective combines node and edge reconstruction losses, with KL regularization: 
\begin{equation*}
    \mathcal{L}(\hat{G}) = \mathcal{L}_\text{node}(\vec{X}, \hat{\vec{X}}) + \mathcal{L}_\text{edge}(\vec{A}, \hat{\vec{A}}) + \beta\,\mathcal{L}_\text{KL}(\mu_{\vec{Z}}, \sigma_{\vec{Z}}), \beta > 0,
\end{equation*}
where $\mathcal{L}_\text{node}(\vec{X}, \hat{\vec{X}}) = \mathrm{CrossEntropy}(\vec{X}, \hat{\vec{X}})$, $\mathcal{L}_\text{edge}(\vec{A}, \hat{\vec{A}}) = \mathrm{CrossEntropy}(\vec{A}, \hat{\vec{A}})$, and $\mathcal{L}_\text{KL}(\mu_{\vec{Z}}, \sigma_{\vec{Z}})   = D_{\mathrm{KL}}\big(\mathcal{N}(\vec{Z}; \mu_{\vec{Z}}, \sigma_{\vec{Z}}) \,\|\, \mathcal{N}(0, \vec{I}_d)\big)$. Note that the decoder can be readily adapted to continuous features by outputting a scalar instead of a categorical vector and replacing the cross-entropy loss with a standard mean-squared error.

\subsection{Latent Flow Matching}\label{sec:fm}

Once the LG-VAE is trained, we freeze its parameters and then train a generative model on its latent space. We adopt FM~\citep{lipmanflow}, implemented with a DiT~\citep{peebles2023scalable}, and we refer to our latent flow-matching model as LG-Flow. Given latents $\vec{Z} \in \mathbb{R}^{n \times d}$, we sample $t \in [0,1]$ and from $\vec{Z}_t = (1-t)\vec{Z} + t \vec{Z}_0$ with $\vec{Z}_0 \sim \mathcal{N}(0, \vec{I}_d)$. We follow the logit-normal time distribution~\citep{esser2024scaling}, which emphasizes intermediate timesteps, and optimize the FM objective
\begin{equation*}
    \mathcal{L}_{\text{CFM}}(\theta) = \mathbb{E}_{t,\vec{Z}}\left[\big\| \model(\vec{Z}_t,t) - (\vec{Z} - \vec{Z}_0)\big\|_2 \right].
\end{equation*}
This latent formulation yields several benefits. First, training is efficient since DiT operates on low-dimensional node embeddings rather than directly predicting the full adjacency matrix. While the Transformer architecture has quadratic complexity in theory, efficient implementations yield near-linear scaling. Consequently, the only step with quadratic complexity is decoding, handled by our lightweight decoder. Secondly, using a modality-agnostic FM backbone highlights modularity, i.e., the encoder handles all graph-specific complexity, while the diffusion stage remains generic.

\subsection{Extending to DAGs}\label{sec:dags}
The LG-VAE described so far applies to undirected graphs. The central role of DAGs in applications such as chip design and logical circuit synthesis motivates extending the LG-VAE and LG-Flow framework beyond undirected graphs to the directed setting. Extending it to DAGs is non-trivial, since the LPE relies on the eigenvalue decomposition of the symmetric Laplacian, which discards edge directionality when applied to directed graphs. To preserve this information, we adopt the \emph{magnetic Laplacian}~\citep{forman1993determinants,shubin1994discrete,colin2013magnetic,furutani2019graph,geisler2023transformers}, a Hermitian matrix that generalizes the Laplacian to directed graphs. It is defined as $\vec{L}_q \coloneqq \vec{D}_s - \vec{A}_s \odot e^{i\Theta^{(q)}}$, where $\vec{A}_s \coloneqq \vec{A} \lor \vec{A}^\top$ is the symmetrized adjacency, $\vec{D}_s$ its degree matrix, $\Theta^{(q)} = 2\pi q (A_{ij} - A_{ji})$, and $q \in \Rb^{+*}$ is a fixed real parameter. This matrix admits an eigendecomposition $\vec{L}_q = \vec{U}\spectrum\vec{U}^*$ with eigenvectors in $\Cb^n$, which we use to introduce the \emph{magnetic LPE} (mLPE).

The mLPE mirrors the construction of the LPE but separates the real and imaginary parts of the eigenvectors, allowing the use of standard real-valued neural networks. Concretely, let $\vec{U}^R = \Re(\vec{U})$ and $\vec{U}^I = \Im(\vec{U})$ be the real and imaginary part of $\vec{U}$, respectively. We then define
\begin{equation*}
\phi(\vec{U}^R, \vec{U}^I, \spectrum) 
= \Big[\phi(\vec{U}^R_1 \,\|\, \vec{U}^I_1 \,\|\, \spectrum + \epsilon), \dots, 
      \phi(\vec{U}^R_n \,\|\, \vec{U}^I_n \,\|\, \spectrum + \epsilon)\Big] 
      \in \Rb^{n \times n \times d},
\end{equation*}
where $\phi \colon \Rb^3 \to \Rb^d$ is applied row-wise, on the last dimension of $\vec{U}^R_1 \,\|\, \vec{U}^I_1 \,\|\, \spectrum + \epsilon \in \Rb^{n\times n \times 3}$ and $\epsilon \in \Rb^l$ is a learnable, zero-initialized vector. The mLPE is then obtained by aggregating across eigenvectors as
\begin{equation*}
\mathrm{mLPE}(\vec{U}^R, \vec{U}^I, \spectrum) 
= \rho\!\left(\sum_{i=1}^n \phi(\vec{U}^R, \vec{U}^I, \spectrum)_{i}^\top \right),
\end{equation*}
with $\rho \colon \Rb^{n \times d} \to \Rb^d$ is an equivariant feed-forward neural networks. 

To characterize adjacency recovery in the directed setting, we introduce the notion of \emph{out-adjacency-identifiability}. Given node embeddings $\vec{P} \in \Rb^{n \times d}$, we compute
\begin{equation*}
\tilde{\vec{P}}^R = \tfrac{1}{\sqrt{d}}(\vec{P}\vec{W}^{K_R})(\vec{P}\vec{W}^{Q_R})^\top,
\qquad
\tilde{\vec{P}}^I = \tfrac{1}{\sqrt{d}}(\vec{P}\vec{W}^{K_I})(\vec{P}\vec{W}^{Q_I})^\top,
\end{equation*}
with learnable matrices $\vec{W}^{K_R},\vec{W}^{Q_R},\vec{W}^{K_I},\vec{W}^{Q_I} \in \Rb^{d\times d}$, and combine them as
\begin{equation*}
\tilde{\vec{P}} \coloneqq \tilde{\vec{P}}^R + \tfrac{2-c}{s}\,\tilde{\vec{P}}^I,
\quad c = \cos(2\pi q), \quad s = \sin(2\pi q).
\end{equation*}

We say that $\vec{P}$ is out-adjacency-identifying if for each node $i$,
\begin{equation*}
\tilde{\vec{P}}_{ij} = \max_k \tilde{\vec{P}}_{ik} \iff \vec{A}_{ij} = 1,
\end{equation*}
We further say that a matrix $\vec{Q} \in \mathbb{R}^{n \times d}$ is \emph{sufficiently out-adjacency-identifying} if it approximates an adjacency-identifying matrix $\vec{P}$ to arbitrary precision, i.e., $\left\| \vec{P} - \vec{Q} \right\|_{\text{F}} < \epsilon$, for all $\epsilon > 0$.    

The result below shows that the mLPE satisfies the corresponding out-adjacency-identifying property,
providing the directed analog of the architectural motivation used for undirected graphs.

\begin{theorem}\label{th:mLPE}
The magnetic Laplacian positional encoding (mLPE) is sufficiently out-adjacency-identifying.
\end{theorem}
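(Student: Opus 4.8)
The plan is to follow the blueprint of the undirected case---that the LPE is sufficiently adjacency-identifying \citep{muller2024aligning}---and adapt it to the three features the directed setting imposes: the symmetric Laplacian is replaced by the Hermitian magnetic Laplacian $\vec{L}_q$; the DeepSet must carry the real and imaginary parts of the eigenvectors separately; and the directed adjacency is recovered from \emph{two} bilinear forms combined with the weight $\tfrac{2-c}{s}$. First I would record the structure of $\vec{L}_q$ on a DAG. Write $c=\cos(2\pi q)$, $s=\sin(2\pi q)$ and assume $q$ is chosen so that $s\neq 0$. Since a DAG has no $2$-cycles, for $j\neq k$ the phase $\Theta^{(q)}_{jk}$ equals $2\pi q$ on an edge $j\to k$ and $-2\pi q$ on an edge $k\to j$, so $(\vec{L}_q)_{jk}=-e^{2\pi i q}$ if $\vec{A}_{jk}=1$, $(\vec{L}_q)_{jk}=-e^{-2\pi i q}$ if $\vec{A}_{kj}=1$, and $(\vec{L}_q)_{jk}=0$ if $j,k$ are non-adjacent, while $(\vec{L}_q)_{jj}=(\vec{D}_s)_{jj}\ge 1$ (no isolated nodes, no self-loops). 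The Dirichlet-type identity $\vec{x}^{*}\vec{L}_q\vec{x}=\tfrac12\sum_{j,k}(\vec{A}_s)_{jk}\bigl|x_j-e^{i\Theta^{(q)}_{jk}}x_k\bigr|^2\ge 0$ shows $\vec{L}_q$ is Hermitian positive semi-definite, hence $\vec{L}_q=\vec{U}\spectrum\vec{U}^{*}$ with $\spectrum\ge 0$, and $\vec{V}\coloneqq\vec{U}\spectrum^{1/2}$ satisfies $\vec{V}\vec{V}^{*}=\vec{L}_q$; writing $\vec{V}=\vec{V}^R+i\vec{V}^I$ then gives $\Re(\vec{L}_q)=\vec{V}^R(\vec{V}^R)^{\top}+\vec{V}^I(\vec{V}^I)^{\top}$ and $\Im(\vec{L}_q)=\vec{V}^I(\vec{V}^R)^{\top}-\vec{V}^R(\vec{V}^I)^{\top}$.

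Next I would exhibit an out-adjacency-identifying target. Take $\vec{P}\coloneqq[\vec{V}^R\,\|\,\vec{V}^I]\in\Rb^{n\times 2n}$, padded with zero columns up to the working dimension $d\ge 2n$. Choosing the bilinear weights so that $\vec{W}^{K_R}(\vec{W}^{Q_R})^{\top}=-\sqrt{d}\,\vec{I}$ and $\vec{W}^{K_I}(\vec{W}^{Q_I})^{\top}=\sqrt{d}\,\bigl(\begin{smallmatrix}0&\vec{I}_n\\-\vec{I}_n&0\end{smallmatrix}\bigr)$ yields $\tilde{\vec{P}}^R=\Re(-\vec{L}_q)$ and $\tilde{\vec{P}}^I=\Im(-\vec{L}_q)$, so that an entrywise computation shows $\tilde{\vec{P}}_{jk}$ equals $2$ when $\vec{A}_{jk}=1$, $2(c-1)\le 0$ when $\vec{A}_{kj}=1$, $0$ when $j,k$ are non-adjacent, and $-(\vec{D}_s)_{jj}\le -1$ when $j=k$. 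Since $s\neq 0$ forces $c<1$, in every row the maximum over off-diagonal entries equals $2$ and is attained exactly at the out-neighbors of $j$ (mirroring the connectedness remark in the undirected definition, a sink row then correctly yields the empty out-neighborhood), so $\vec{P}$ is out-adjacency-identifying. I would also note that $\tilde{\vec{P}}^R$ and $\tilde{\vec{P}}^I$ are quadratic forms in $\vec{V}$ invariant under $\vec{V}\mapsto\vec{V}\vec{O}$ for any block-unitary $\vec{O}$ respecting the eigenspaces of $\vec{L}_q$; hence the construction is insensitive to the $U(1)$-per-eigenspace ambiguity of the Hermitian eigendecomposition, and it suffices to fix one eigenbasis.

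Finally I would show the mLPE approximates $\vec{P}$ in Frobenius norm to arbitrary precision. By construction (cf.\ the LPE of \Cref{sec:aip}), node $i$'s mLPE embedding is a DeepSet of the multiset $\oms(U^R_{ik},U^I_{ik},\lambda_k+\epsilon_k)\cms_{k\in[n]}$. Invoking the universality of DeepSets on bounded-size multisets \citep{zaheer2017deep}---using the perturbed eigenvalues $\lambda_k+\epsilon_k$ as separating addresses, with the learnable, zero-initialized $\epsilon$ chosen to break any spectral multiplicity---one can pick $\phi$ and $\rho$ so that this embedding lies within any prescribed tolerance $\delta>0$ of $(\sqrt{\lambda_1}U^R_{i1},\sqrt{\lambda_1}U^I_{i1},\dots,\sqrt{\lambda_n}U^R_{in},\sqrt{\lambda_n}U^I_{in})=\vec{P}_i$, uniformly in $i$, so that $\left\|\vec{P}-\mathrm{mLPE}(\vec{U}^R,\vec{U}^I,\spectrum)\right\|_{\text{F}}<\delta$ for every $\delta>0$; together with the previous paragraph this is exactly the claim. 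The step I expect to be the main obstacle is this last one: making the DeepSet approximation rigorous when $\vec{L}_q$ has repeated eigenvalues, where the per-node eigenvector data is defined only up to a unitary rotation within each eigenspace. The spectral perturbation by $\epsilon$ and the basis-invariance noted at the end of the second paragraph are precisely what absorb this difficulty, while the $\vec{L}_q$ bookkeeping and the algebraic identity behind the coefficient $\tfrac{2-c}{s}$ are routine.
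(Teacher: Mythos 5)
Your proposal takes essentially the same route as the paper: it exhibits explicit bilinear weights so that $\tilde{\vec{P}}^R$ and $\tilde{\vec{P}}^I$ reproduce $\mp\Re(\vec{L}_q)$ and $\mp\Im(\vec{L}_q)$ for $\vec{P}=\big[\Re(\vec{U}\spectrum^{1/2})\,\|\,\Im(\vec{U}\spectrum^{1/2})\big]$, performs the same entrywise case analysis yielding the values $2$, $2(c-1)$, and $0$ on out-edges, in-edges, and non-edges, and then invokes DeepSet universality with the eigenvalue perturbation $\epsilon$ to approximate this target in Frobenius norm, exactly as in the paper's two lemmas and main argument. It is correct; the only cosmetic difference is that the paper states the target up to column permutations $\vec{M}_X,\vec{M}_Y$ induced by sorting the perturbed eigenvalues, a detail your basis-invariance remark already absorbs.
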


Proofs and the full DAG autoencoder specification are provided in \cref{sec:def_dag}. This result ensures that replacing the LPE with the mLPE naturally extends the LG-VAE to DAGs while preserving node-level latent representations compatible with latent diffusion.

\begin{figure}[t]
    \centering
    \includegraphics[width=\textwidth]{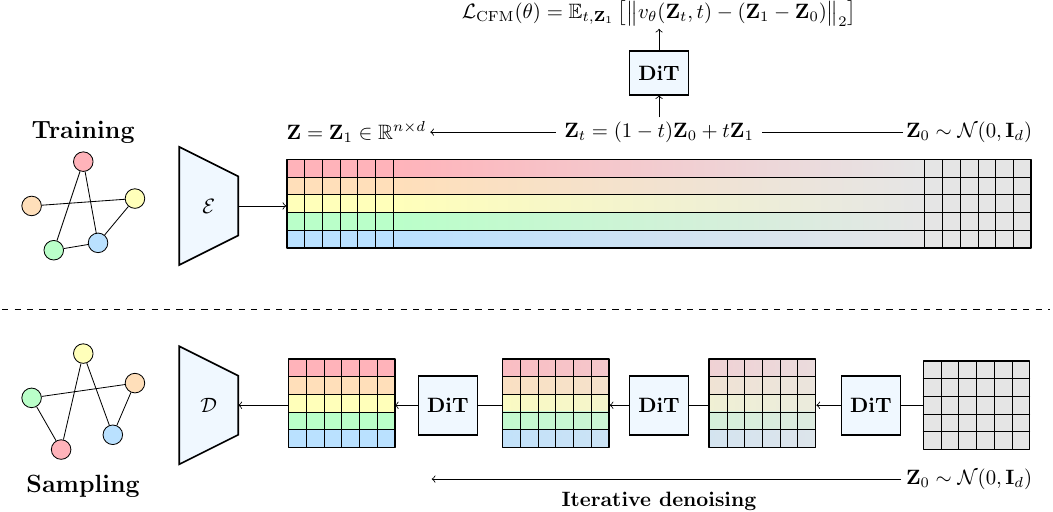}
    \caption{Overview of LG-Flow. During training (top), the frozen encoder maps graphs into latent representations $\vec{Z}$. Noisy latents $\vec{Z}_t$ are sampled along a linear interpolation path and passed through the DiT, which predicts $\model$ and is optimized with the conditional flow matching loss. During sampling (bottom), noise is generated and iteratively denoised using the trained DiT. The final latents are then decoded into synthetic graphs using the frozen decoder.}
    \label{fig:ldm}
    \vspace{-0.5cm}
\end{figure}

\section{Experimental study}\label{sec:exp}
We now investigate the performance of our approach in relation to the requirements we derived in the introduction. Specifically, we answer the following questions.

\begin{enumerate}[label=\textbf{\textbf{Q\arabic*} }, leftmargin=.75cm]
    \item How faithfully does LG-VAE reconstruct the graph’s structure?
    \item How does LG-Flow compare to prior graph diffusion models in generative performance?
    \item How efficient is LG-Flow in inference time compared to state-of-the-art graph diffusion models?
\end{enumerate}
We also provide additional experiments and ablations in \Cref{app:add_exp}.

\paragraph{Datasets}
We evaluate the generative performance of our approach on six datasets: three synthetic datasets--\textsc{Extended Planar}, \textsc{Extended Tree}, and \textsc{Ego}--, two molecular datasets--\textsc{Moses} and \textsc{GuacaMol}--, and two DAG generation datasets, \textsc{TPU Tile} and \textsc{Price}. An extensive description of those datasets is available in \Cref{sec:datasets}. 

\subsection{\textbf{Q1}: Autoencoder Reconstruction Ability}
We begin by empirically evaluating our autoencoder’s ability to faithfully reconstruct adjacency matrices. We focus on two simple, unlabeled datasets with strong structural constraints—\textsc{Extended Planar} and \textsc{Extended Tree}. For these distributions, near-lossless reconstruction is crucial to ensure that decoded graphs maintain their structural properties, specifically planarity and tree structure.
\begin{wraptable}{r}{0.45\textwidth}
    \centering
    % \vspace{-0.3cm}
    \caption{\textbf{Autoencoder reconstruction performance.} Best results are in \textbf{bold}. Only LG-VAE achieves near-lossless reconstruction, as shown by the \textsc{Sample Accuracy} metric.}
\resizebox{0.45\textwidth}{!}{
    \begin{tabular}{l l c c c c}
    \toprule
     & & \multicolumn{2}{c}{\textsc{Extended Planar}} & \multicolumn{2}{c}{\textsc{Extended Tree}} \\
    \cmidrule(lr){3-4}\cmidrule(lr){5-6}
    \textbf{} & \textbf{Method} & \textbf{Edge Acc.} & \textbf{Sample Acc.} & \textbf{Edge Acc.} & \textbf{Sample Acc.} \\
    \midrule
     & GLAD                   & 0.9129 & 0.    & 0.9866 & 0.     \\
     & DGVAE                  & 0.9547 & 0.    & 0.9777     & 0.     \\
    \midrule
    \multirow{3}{*}{\rotatebox[origin=c]{90}{Ours}}     & \textsc{GVAE decoder}  & 0.9000 & 0.    & 0.6351 & 0.     \\
     & \textsc{GNN encoder}   & 0.9977 & 0.7734 & 0.2539 & \textcolor{red}{0.} \\
     & \textsc{LG-VAE}        & \textbf{1.0000} & \textbf{0.9961} & \textbf{1.0000} & \textbf{0.9844}     \\
    \bottomrule
    \end{tabular}}
    \vspace{-1.0cm}
    \label{tab:ae_abl}
\end{wraptable}

\textbf{Baselines} We evaluate LG-VAE on \textsc{Extended Planar} and \textsc{Tree}, and compare it against two prior graph autoencoders: \textsc{DGVAE}~\citep{boget2024discrete} and the autoencoder used in \textsc{GLAD}~\citep{nguyen2024glad}. We also conduct ablations of LG-VAE’s components, (i) removing the positional encoder $\phi$ to obtain a raw \textsc{GNN encoder}, and (ii) replacing our decoder with the original \textsc{GVAE decoder}, yielding a GVAE-like variant that retains $\phi$ as positional encoder.

\textbf{Metrics} Our primary evaluation metric is \textsc{Sample Accuracy}, i.e., the fraction of test graphs that the decoder reconstructs exactly, including node/edge labels when present. We also report \textsc{Edge Accuracy}, i.e., the proportion of correctly reconstructed edges in the test set.

\textbf{Results} We report results in \Cref{tab:ae_abl}. Except for \textsc{LG-VAE}, no method achieves near-lossless reconstruction in terms of \textsc{Sample Accuracy}. However, all methods attain \textsc{Edge Accuracy} above $0.9$, only \textsc{LG-VAE} and its raw \textsc{GNN Encoder} variant obtain non-zero \textsc{Sample Accuracy}. 

Consequently, these findings highlight the limitations of current VAEs, whose latent representations are less expressive than those of LG-VAE. They struggle to reconstruct samples from latents with high accuracy, casting doubt on their ability to recover the required structural properties. For instance, even if a latent diffusion model perfectly matches the latent distribution, the decoder is not guaranteed to yield planar graphs or trees.

\subsection{\textbf{Q2 \& Q3:} Assessing generative performance and inference efficiency}
In this section, we evaluate our latent diffusion model on various benchmarks, including synthetic graphs, molecules, and DAGs. 

\textbf{Baselines} 
We evaluate LG-Flow against several prior graph diffusion models, two discrete-time methods, \textsc{DiGress}~\citep{vignac2022digress} and its sparse counterpart, \textsc{SparseDiff}~\citep{qin2023sparse}, two continuous-time methods, \textsc{Cometh}~\citep{siraudin2025cometh} and \textsc{DeFoG}~\citep{qinmadeira2024defog}. Note that \textsc{Disco}~\citep{xu2024discrete} and \textsc{Cometh} are highly similar methods, and either could have been evaluated. We chose to focus on \textsc{Cometh} due to the inherent high computational cost of discrete graph diffusion models. For DAGs, we assess our approach against diffusion models tailored to them, namely \textsc{LayerDAG}, an autoregressive diffusion model that generates DAGs layer by layer, and \textsc{Directo}, a recent adaptation of \textsc{DeFoG} for DAG generation.

\textbf{Metrics} 
For synthetic graphs and DAGs, we evaluate generation quality using the standard \new{Maximum Mean Discrepancy} (MMD) metric between the test and generated sets, computed over various statistics, e.g., degree or clustering coefficient. Validity for trees, planar graphs, and DAGs is defined by structural constraints (planarity, tree property, or acyclicity). For molecular datasets, we assess validity based on fundamental chemical rules, uniqueness, and novelty, complemented by benchmark-specific scores. A detailed description of all metrics is deferred to \Cref{sec:datasets}.

Across all datasets, we also measure sampling time (\textbf{Time}) to assess inference efficiency. All models were sampled on the same hardware, using equal batch sizes whenever possible, or, otherwise, with batch sizes chosen to maximize GPU utilization.

\textbf{Results on synthetic graphs}
Our results are shown in \Cref{tab:synthetic}. Our approach is competitive across all distribution metrics and achieves strong performance on V.U.N. In terms of sampling time, it delivers a $100\times$ speed-up on \textsc{Extended Planar} and a $64\times$ speed-up on \textsc{Extended Tree}. On \textsc{Ego}, our method outperforms \textsc{DeFoG} while reducing sampling time by up to $1000\times$.

To further demonstrate the efficiency of LG-Flow, we plot memory requirements during sampling as a function of batch size in \Cref{fig:ego_plot}. Due to the DiT’s efficient implementation, our method's memory usage scales linearly with batch size, allowing the entire test batch of 151 samples to fit in memory. In contrast, \textsc{DeFoG} runs out of memory once the batch size exceeds 32.

\begin{table*}[t]
\centering
\caption{\textbf{Generation results on synthetic graphs.}
In the table, the best results are in \textbf{bold}, and the second-best are \underline{underlined}.
On Ego, while our method scales linearly with batch size and can accommodate the entire
test batch of 151 samples on a single GPU, \textsc{DeFoG} runs out of memory at batch size 64.
We report the average over 5 sampling runs, along with a 95\% CI, and MMD metrics are reported with a \num{e3} factor.
\label{tab:synthetic}}

% -------- Subtable A (full width) --------
\begin{subtable}[t]{\textwidth}
    \centering
    \caption{Generation results on the Extended Planar and Extended Tree datasets.}
    \label{tab:res_planar_tree}
    \resizebox{\textwidth}{!}{
    \begin{tabular}{l c c c c c c | c c c c c c}
    \toprule
    Dataset &
    \multicolumn{6}{c|}{\textbf{Extended Planar}} &
    \multicolumn{6}{c}{\textbf{Extended Tree}} \\
    \midrule
    Method &
    \textbf{Deg. ($\downarrow$)} &
    \textbf{Orbit ($\downarrow$)} &
    \textbf{Cluster. ($\downarrow$)} &
    \textbf{Spec. ($\downarrow$)} &
    \textbf{V.U.N. ($\uparrow$)} &
    \cellcolor{gray!20}\textbf{Time ($\downarrow$)} &
    \textbf{Deg. ($\downarrow$)} &
    \textbf{Orbit ($\downarrow$)} &
    \textbf{Cluster. ($\downarrow$)} &
    \textbf{Spec. ($\downarrow$)} &
    \textbf{V.U.N. ($\uparrow$)} &
    \cellcolor{gray!20}\textbf{Time ($\downarrow$)} \\
    \midrule
    SparseDiff
    & 0.8$\spm{0.1}$ & 0.9$\spm{0.1}$ & 20.9$\spm{1.3}$ & 1.9$\spm{0.2}$ & 74.1$\spm{2.1}$ & \cellcolor{gray!20}\num{12.1e2}
    & \textbf{0.}$\spm{0.}$ & 0.$\spm{0.}$ & 0.$\spm{0.}$ & \textbf{1.2}$\spm{0.2}$ & 95.3$\spm{1.5}$ & \cellcolor{gray!20}\num{11.5e2} \\

    Cometh
    & \underline{0.7}$\spm{0.1}$ & 2.2$\spm{0.3}$ & 20.0$\spm{1.1}$ & \textbf{1.3}$\spm{0.2}$ & 96.7$\spm{0.8}$ & \cellcolor{gray!20}\num{9.7e2}
    & \textbf{0.}$\spm{0.}$ & 0.$\spm{0.}$ & 0.$\spm{0.}$ & \underline{1.4}$\spm{0.4}$ & 93.8$\spm{1.6}$ & \cellcolor{gray!20}\num{9.7e2} \\

    DeFog
    & \textbf{0.3}$\spm{0.1}$ & \textbf{0.1}$\spm{0.1}$ & \textbf{9.7}$\spm{0.3}$ & \textbf{1.3}$\spm{0.1}$ & \textbf{99.6}$\spm{0.4}$ & \cellcolor{gray!20}\underline{\num{8.9e2}}
    & \underline{0.1}$\spm{0.}$ & 0.$\spm{0.}$ & 0.$\spm{0.}$ & \textbf{1.2}$\spm{0.2}$ & \underline{98.1}$\spm{0.7}$ & \cellcolor{gray!20}\underline{\num{8.9e2}} \\

    LG-Flow
    & \textbf{0.3}$\spm{0.1}$ & \underline{0.4}$\spm{0.1}$ & \underline{10.0}$\spm{3.0}$ & \textbf{1.3}$\spm{0.2}$ & \underline{99.4}$\spm{0.3}$ & \cellcolor{gray!20}\textbf{8.9}$\spm{0.}$
    & 0.3$\spm{0.1}$ & 0.$\spm{0.}$ & 0.$\spm{0.}$ & 1.6$\spm{0.3}$ & \textbf{98.3}$\spm{0.6}$ & \cellcolor{gray!20}\textbf{14.0}$\spm{0.}$ \\
    \bottomrule
    \end{tabular}}
\end{subtable}

% -------- Subtable B (0.64 width) --------
\begin{subtable}[t]{0.64\textwidth}
    \centering
    \vspace{0.5cm}
    \caption{Generation results on medium-size graphs: Ego.}
    \label{tab:res_ego}
    \resizebox{0.85\linewidth}{!}{
    \begin{tabular}{l c c c c c c}
    \toprule
    Dataset & \multicolumn{6}{c}{\textbf{Ego}} \\
    \midrule
    Method &
    \textbf{Deg. ($\downarrow$)} &
    \textbf{Orbit ($\downarrow$)} &
    \textbf{Cluster. ($\downarrow$)} &
    \textbf{Spec. ($\downarrow$)} &
    \textbf{Ratio ($\downarrow$)} &
    \cellcolor{gray!20}\textbf{Time (s) ($\downarrow$)} \\
    \midrule
    Training Set
    & 0.2 & 6.8 & 7.1 & 1.0 & 1.0 & \cellcolor{gray!20}-- \\
    \midrule
    DiGress
    & 8.9$\spm{1.6}$ & 30$\spm{3}$ & 54$\spm{4}$ & 19$\spm{3.2}$ & 19$\spm{3.1}$ & \cellcolor{gray!20}-- \\
    SparseDiff
    & 3.7$\spm{0.4}$ & 20$\spm{4}$ & 32$\spm{1}$ & \underline{5.6}$\spm{0.8}$ & 7.9$\spm{0.9}$ & \cellcolor{gray!20}\num{5.3e2} \\
    Cometh
    & 8.1$\spm{0.1}$ & \underline{19.6}$\spm{2.0}$ & 37.4$\spm{1.5}$ & 12.9$\spm{1.6}$ & 15.4$\spm{1.7}$ & \cellcolor{gray!20}\underline{\num{1.3e4}} \\
    DeFog
    & \underline{1.8}$\spm{0.5}$ & 21.4$\spm{2.6}$ & \underline{30.7}$\spm{3.3}$ & \underline{6.2}$\spm{1.2}$ & \underline{5.2}$\spm{0.8}$ & \cellcolor{gray!20}\underline{\num{1.3e4}} \\
    LG-Flow
    & \textbf{0.6}$\spm{0.1}$ & \textbf{14.7}$\spm{2.3}$ & \textbf{16.5}$\spm{0.9}$ & \textbf{3.0}$\spm{0.3}$ & \textbf{2.5}$\spm{0.2}$ & \cellcolor{gray!20}\textbf{11.6}$\spm{0.6}$ \\
    \bottomrule
    \end{tabular}}
\end{subtable}
\hfill
% -------- Subtable C (0.35 width) --------
\begin{subtable}[t]{0.35\textwidth}
    \centering
    \vspace{0.4cm}
    \includegraphics[width=0.7\linewidth]{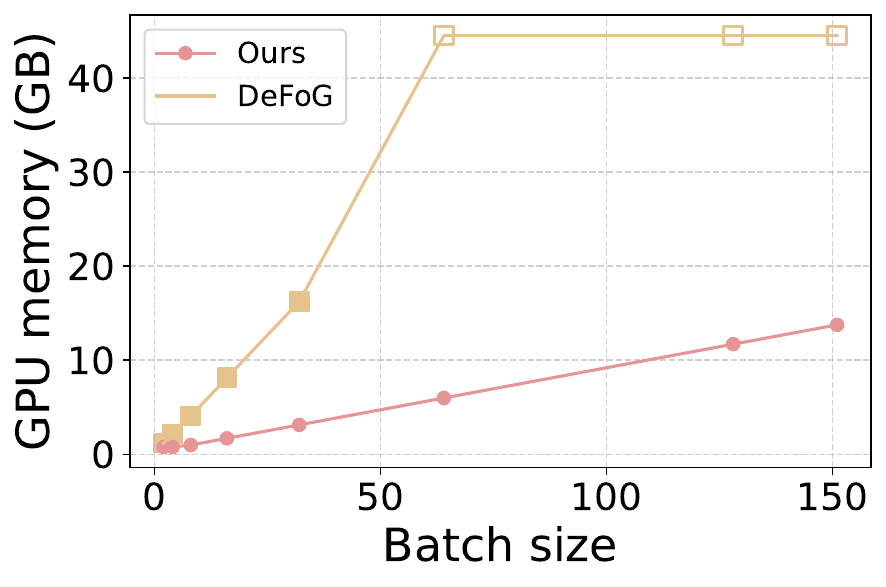}
    \caption{Comparison of memory use for \textsc{DeFoG} and our method on the Ego dataset.}
    \label{fig:ego_plot}
\end{subtable}
\vspace{-1cm}
\end{table*}

\textbf{Results on molecular generation} 
Our results are summarized in \Cref{tab:molecules}. On \textsc{GuacaMol}, our method achieves strong performance on validity, uniqueness, and novelty, consistently outperforming \textsc{DiGress} and \textsc{DisCo}.
On \textbf{KL div}, we reach state-of-the-art results, with performance close to \textsc{DeFoG}. Most notably, we surpass all baselines on \textbf{FCD} by a large margin, highlighting the method’s ability to capture the underlying chemical distribution.

On \textsc{MOSES}, LG-Flow remains competitive across all metrics, outperforming \textsc{DiGress} and \textsc{DisCo} on validity again. Crucially, while maintaining competitive accuracy, our method delivers substantial gains in inference efficiency, achieving a $36.4\times$ speedup on \textsc{GuacaMol} and a $58\times$ speedup on \textsc{MOSES} compared to the state-of-the-art \textsc{DeFoG}. To put these results into perspective, we plot validity against sampling efficiency in \Cref{fig:valplot}. Our method exhibits a significantly better trade-off than prior diffusion models that operate directly in graph space.
\begin{wrapfigure}{r}{0.4\textwidth}
    \centering
    \includegraphics[scale=0.20]{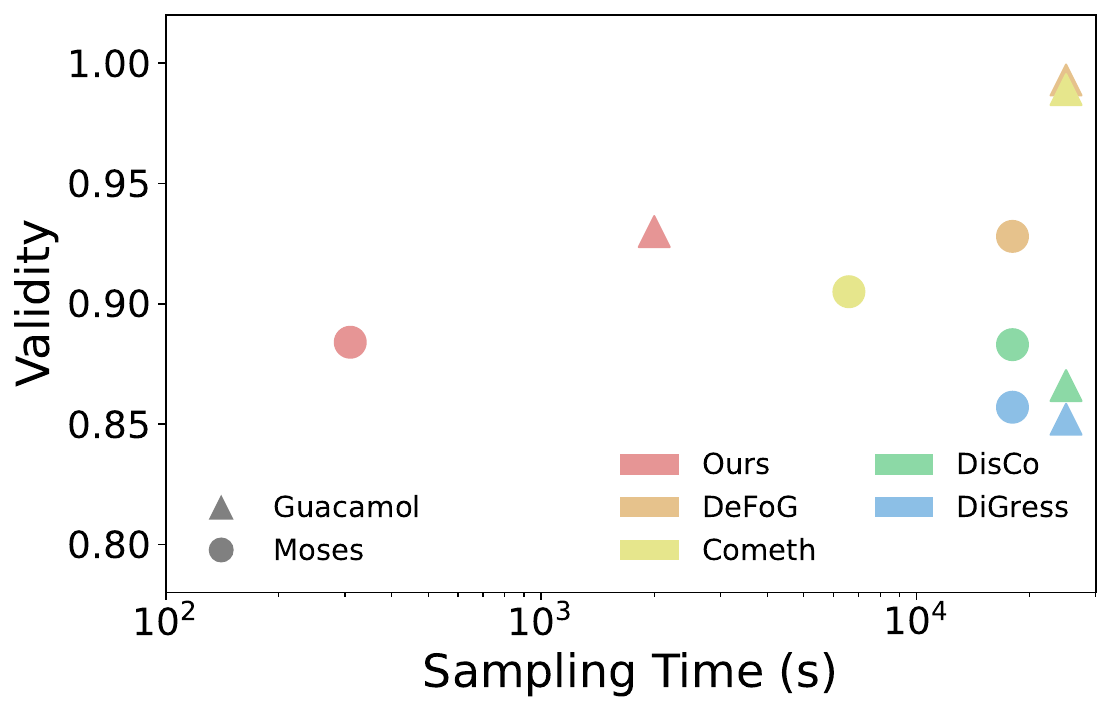}
    \caption{We compare molecular validity and sampling time. The ideal lies in the top-left corner, with high validity and low cost. Our method offers a better trade-off than prior graph-space diffusion models.}
    \label{fig:valplot}
    \vspace{-0.1cm}
\end{wrapfigure}

\begin{table}[t]
\centering
\caption{\textbf{Generation on molecular datasets: MOSES \& GuacaMol}. Best results are denoted in \textbf{bold} and second best are \underline{underlined}.}
\resizebox{\linewidth}{!}{
\begin{tabular}{l c c c c c c | c c c c c c c c}
\toprule
\multicolumn{7}{c|}{\textbf{GuacaMol}} &
\multicolumn{8}{c}{\textbf{MOSES}} \\
\midrule
Model &
\textbf{V.} ($\uparrow$) &
\textbf{V.U.} ($\uparrow$) &
\textbf{V.U.N} ($\uparrow$) &
\textbf{KL div} ($\uparrow$) &
\textbf{FCD} ($\uparrow$) &
\cellcolor{gray!20}\textbf{Time (s) ($\downarrow$)} &
\textbf{Val.} ($\uparrow$) &
\textbf{Unique.} ($\uparrow$) &
\textbf{Novel.} ($\uparrow$) &
\textbf{Filters} ($\uparrow$) &
\textbf{FCD} ($\downarrow$) &
\textbf{SNN} ($\uparrow$) &
\textbf{Scaf.} ($\uparrow$) &
\cellcolor{gray!20}\textbf{Time (s) ($\downarrow$)} \\
\midrule
Training Set
& 100.0 & 100.0 & 0.0 & 99.9 & 92.8 & \cellcolor{gray!20}--
& 100.0 & 100.0 & 0.0 & 100.0 & 0.01 & 0.64 & 99.1 & \cellcolor{gray!20}-- \\
\midrule
DiGress
& 85.2 & 85.2 & 85.1 & 92.9 & 68.0 & \cellcolor{gray!20}--
& 85.7 & \textbf{100.0} & \underline{95.0} & 97.1 & \textbf{1.19} & 0.52 & 14.8 & \cellcolor{gray!20}-- \\

DisCo
& 86.6 & 86.6 & 86.5 & 92.6 & 59.7 & \cellcolor{gray!20}--
& 88.3 & \textbf{100.0} & \textbf{97.7} & 95.6 & 1.44 & 0.50 & \underline{15.1} & \cellcolor{gray!20}-- \\

Cometh
& \underline{98.9} & \underline{98.9} & \underline{97.6} & 96.7 & 72.7 & \cellcolor{gray!20}\underline{\num{4.0e4}}
& \underline{90.5} & \underline{99.9} & 92.6 & \textbf{99.1} & \underline{1.27} & \underline{0.54} & \textbf{16.0} & \cellcolor{gray!20}\num{1.9e4} \\

DeFoG
& \textbf{99.0} & \textbf{99.0} & \textbf{97.9} & \textbf{97.7} & \underline{73.8} & \cellcolor{gray!20}\underline{\num{4.0e4}}
& \textbf{92.8} & \underline{99.9} & 92.1 & \underline{98.9} & 1.95 & \textbf{0.55} & 14.4 & \cellcolor{gray!20}\underline{\num{1.8e4}} \\

LG-Flow
& 93.0 & 93.0 & 91.5 & \underline{97.2} & \textbf{81.8} & \cellcolor{gray!20}\textbf{\num{1.1e3}}
& 88.4 & \underline{99.9} & 90.5 & \underline{98.9} & 1.43 & \textbf{0.55} & 12.3 & \cellcolor{gray!20}\textbf{\num{3.1e2}} \\
\bottomrule
\end{tabular}
}
\label{tab:molecules}
\vspace{-0.45cm}
\end{table}

\textbf{Results on DAG generation}
Our results are reported in \Cref{tab:res_dag}. As already noted in \cite{carballo2025directo}, \textsc{LayerDAG}’s strong performance on \textbf{V.U.N} is misleading: it collapses on distributional metrics, clearly failing to capture the underlying distributional characteristics. 

In contrast, we outperform \textsc{Directo} on \textbf{Cluster.}, \textbf{Spec.}, \textbf{Wave.}, and, most importantly, on \textbf{Valid}. LG-Flow shows slightly higher memorization than \textsc{Directo}, likely due to the limited diversity of the training data. Most importantly, our approach is dramatically more efficient at inference, achieving up to a $680\times$ speed-up over \textsc{Directo}. Results for the \textsc{Price} dataset can be found in \Cref{app:add_exp}.

\begin{table}[h!]
\vspace{-0.3cm}
\centering
\caption{\textbf{Generation on DAGs: TPU Tile dataset.} We highlight best results in \textbf{bold}.}
\resizebox{\linewidth}{!}{
\begin{tabular}{l c c c c c | c c c c c}
\toprule
Method &
\textbf{Out Deg.} &
\textbf{In Deg.} &
\textbf{Cluster.} &
\textbf{Spec.} &
\textbf{Wave.} &
\textbf{Valid} &
\textbf{Unique} &
\textbf{Novel} &
\textbf{V.U.N} &
\cellcolor{gray!20}\textbf{Time} \\
\midrule
Training Set
& 0.3 & 0.3 & 0.7 & 0.6 & 0.2
& 1.0 & 1.0 & 0. & 0. & \cellcolor{gray!20}-- \\
\midrule
LayerDAG
& 193.3$\spm{90.5}$ & 222.5$\spm{39.5}$ & 151.2$\spm{52.2}$ & 50.1$\spm{20.6}$ & 76.5$\spm{25.1}$
& \textbf{100.0}$\spm{0.0}$ & \textbf{99.5}$\spm{1.0}$ & \textbf{98.5}$\spm{3.0}$ & \textbf{98.5}$\spm{3.0}$ & \cellcolor{gray!20}-- \\

Directo
& \textbf{3.9}$\spm{1.7}$ & \textbf{37.6}$\spm{5.1}$ & 21.1$\spm{11.7}$ & 12.6$\spm{2.2}$ & 6.2$\spm{0.9}$
& 90.5$\spm{3.3}$ & 90.5$\spm{4.6}$ & 97.5$\spm{3.2}$ & 80.5$\spm{4.6}$ & \cellcolor{gray!20}\num{3.4e2} \\

LG-Flow
& 12.8$\spm{3.6}$ & 81.8$\spm{6.3}$ & \textbf{5.9}$\spm{2.5}$ & \textbf{8.1}$\spm{1.3}$ & \textbf{4.0}$\spm{1.3}$
& 98.5$\spm{2.6}$ & 94.0$\spm{3.3}$ & 77.5$\spm{3.9}$ & 74.5$\spm{3.8}$ & \cellcolor{gray!20}\textbf{0.5}$\spm{0.3}$ \\
\bottomrule
\end{tabular}}
\label{tab:res_dag}
\vspace{-0.3cm}
\end{table}

\section{Limitations and possible remedies}

From the perspective of applications to large-scale graphs, LG-Flow presents several limitations. First, although diffusion is performed in latent space using an efficient DiT, the final decoder still needs to reconstruct $O(n^2)$ possible edges. This can be expensive for very large, sparse graphs. A possible remedy could be to use sparse decoding, where only a small set of candidate edges is scored, for example, using top-$k$ neighbors or approximate nearest-neighbor search. Secondly, the theoretical edge-reconstruction argument assumes access to the full Laplacian eigenbasis, which entails enumerating all (independent) eigenvectors and can be computationally expensive for large graphs. However, in practice, we only compute the first $k$ eigenvectors for small $k$, which often works well; see \Cref{app:add_exp}. In addition, there exist very fast methods for computing such partial eigendecompositions, especially on sparse graphs, including Lanczos-type, randomized, and iterative solvers.

\section{Conclusion}
We proposed a latent diffusion framework for graphs, built on a permutation-equivariant Laplacian autoencoder, which achieves high reconstruction accuracy for undirected graphs and DAGs. By mapping nodes to fixed-dimensional embeddings, our method removes the quadratic adjacency-space bottleneck from diffusion training and sampling, while retaining a lightweight quadratic decoder for final graph reconstruction. It also supports high-capacity backbones such as Diffusion Transformers. Experiments show near-lossless reconstruction, competitive or superior generative performance, and sampling up to three orders of magnitude faster. This framework shifts the focus from graph-specific diffusion design to cheaper, more expressive autoencoders, unifies graph and DAG generation, enables the transfer of conditional methods from image generation, and suggests a path toward scaling generative models to much larger graphs.

\clearpage
\bibliographystyle{abbrvnat}
\bibliography{bibliography}

@article{10.1093/bib/bbae142,
    author = {Wang, Conghao and Ong, Hiok Hian and Chiba, Shunsuke and Rajapakse, Jagath C},
    eprint = {https://academic.oup.com/bib/article-pdf/25/3/bbae142/57160111/bbae142.pdf},
    issn = {1477-4054},
    journal = {Briefings in Bioinformatics},
    number = {3},
    title = {GLDM: hit molecule generation with constrained graph latent diffusion model},
    year = {2024}
}

@inproceedings{10.1145/3627673.3679547,
    author = {Bian, Tian and Niu, Yifan and Chang, Heng and Yan, Divin and Huang, Junzhou and Rong, Yu and Xu, Tingyang and Li, Jia and Cheng, Hong},
    booktitle = {Proceedings of the 33rd ACM International Conference on Information and Knowledge Management},
    isbn = {9798400704369},
    keywords = {conditional molecule generation, graph latent diffusion model},
    location = {Boise, ID, USA},
    numpages = {11},
    series = {CIKM '24},
    title = {Hierarchical Graph Latent Diffusion Model for Conditional Molecule Generation},
    year = {2024}
}

@article{10508504,
    author = {Yang, Ling and Huang, Zhilin and Zhang, Zhilong and Liu, Zhongyi and Hong, Shenda and Zhang, Wentao and Yang, Wenming and Cui, Bin and Zhang, Luxia},
    journal = {TKDE},
    keywords = {Diffusion processes;Task analysis;Adaptation models;Training;Electronic mail;Complexity theory;Mathematical models;Graph neural networks;deep generative models;diffusion models;graph generation},
    number = {11},
    title = {Graphusion: Latent Diffusion for Graph Generation},
    year = {2024}
}

@article{asthana2024multiconditioned,
    author = {Rohan Asthana and Joschua Conrad and Youssef Dawoud and Maurits Ortmanns and Vasileios Belagiannis},
    issn = {2835-8856},
    journal = {Transactions on Machine Learning Research},
    title = {Multi-conditioned Graph Diffusion for Neural Architecture Search},
    year = {2024}
}

@inproceedings{austin2021structured,
    author = {Jacob Austin and
Daniel D. Johnson and
Jonathan Ho and
Daniel Tarlow and
Rianne van den Berg},
    booktitle = {Advances in Neural Information Processing Systems 34},
    title = {Structured Denoising Diffusion Models in Discrete State-Spaces},
    year = {2021}
}

@article{bechler2025towards,
    author = {Bechler-Speicher, Maya and Eliasof, Moshe and Sch{\"o}nlieb, Carola-Bibiane and Gilad-Bachrach, Ran and Globerson, Amir},
    journal = {CoRR},
    title = {Towards Invariance to Node Identifiers in Graph Neural Networks},
    year = {2025}
}

@article{boget2024discrete,
    author = {Yoann Boget and Magda Gregorova and Alexandros Kalousis},
    issn = {2835-8856},
    journal = {Transactions on Machine Learning Research},
    title = {Discrete Graph Auto-Encoder},
    year = {2024}
}

@article{brown2019guacamol,
    author = {Brown, Nathan and Fiscato, Marco and Segler, Marwin HS and Vaucher, Alain C},
    journal = {Journal of Chemical Information and Modeling},
    number = {3},
    title = {GuacaMol: benchmarking models for de novo molecular design},
    year = {2019}
}

@inproceedings{campbell2022continuous,
    author = {Andrew Campbell and
Joe Benton and
Valentin De Bortoli and
Thomas Rainforth and
George Deligiannidis and
Arnaud Doucet},
    booktitle = {Advances in Neural Information Processing Systems 35},
    title = {A Continuous Time Framework for Discrete Denoising Models},
    year = {2022}
}

@article{carballo2025directo,
    author = {Carballo-Castro, Alba and Madeira, Manuel and Qin, Yiming and Thanou, Dorina and Frossard, Pascal},
    journal = {ArXiv preprint},
    title = {Generating Directed Graphs with Dual Attention and Asymmetric Encoding},
    year = {2025}
}

@inproceedings{chen2023efficient,
    author = {Xiaohui Chen and
Jiaxing He and
Xu Han and
Liping Liu},
    booktitle = {International Conference on Machine Learning},
    title = {Efficient and Degree-Guided Graph Generation via Discrete Diffusion
Modeling},
    year = {2023}
}

@inproceedings{chenanalog,
    author = {Ting Chen and
Ruixiang Zhang and
Geoffrey E. Hinton},
    booktitle = {International Conference on Learning Representations},
    title = {Analog Bits: Generating Discrete Data using Diffusion Models with
Self-Conditioning},
    year = {2023}
}

@inproceedings{chengraph,
    author = {Chen, Xiaohui and Wang, Yinkai and He, Jiaxing and Du, Yuanqi and Hassoun, Soha and Xu, Xiaolin and Liu, Liping},
    booktitle = {International Conference on Machine Learning},
    title = {Graph Generative Pre-trained Transformer},
    year = {2025}
}

@article{colin2013magnetic,
    author = {Colin de Verdi{\`e}re, Yves},
    journal = {Analysis \& PDE},
    number = {5},
    title = {Magnetic interpretation of the nodal defect on graphs},
    year = {2013}
}

@inproceedings{dwivedi2022graph,
    author = {Vijay Prakash Dwivedi and
Anh Tuan Luu and
Thomas Laurent and
Yoshua Bengio and
Xavier Bresson},
    booktitle = {International Conference on Learning Representations},
    title = {Graph Neural Networks with Learnable Structural and Positional Representations},
    year = {2022}
}

@inproceedings{esser2024scaling,
    author = {Patrick Esser and
Sumith Kulal and
Andreas Blattmann and
Rahim Entezari and
Jonas M{\"{u}}ller and
Harry Saini and
Yam Levi and
Dominik Lorenz and
Axel Sauer and
Frederic Boesel and
Dustin Podell and
Tim Dockhorn and
Zion English and
Robin Rombach},
    booktitle = {International Conference on Machine Learning},
    title = {Scaling Rectified Flow Transformers for High-Resolution Image Synthesis},
    year = {2024}
}

@article{forman1993determinants,
    author = {Forman, Robin},
    journal = {Topology},
    number = {1},
    title = {Determinants of Laplacians on graphs},
    year = {1993}
}

@inproceedings{fu2024hyperbolic,
    author = {Xingcheng Fu and
Yisen Gao and
Yuecen Wei and
Qingyun Sun and
Hao Peng and
Jianxin Li and
Xianxian Li},
    booktitle = {International Conference on Machine Learning},
    title = {Hyperbolic Geometric Latent Diffusion Model for Graph Generation},
    year = {2024}
}

@inproceedings{furutani2019graph,
    author = {Furutani, Satoshi and Shibahara, Toshiki and Akiyama, Mitsuaki and Hato, Kunio and Aida, Masaki},
    booktitle = {Joint European conference on machine learning and knowledge discovery in databases},
    organization = {Springer},
    title = {Graph signal processing for directed graphs based on the hermitian laplacian},
    year = {2019}
}

@inproceedings{gao2025diffusionmeetsflow,
    author = {Gao, Ruiqi and Hoogeboom, Emiel and Heek, Jonathan and Bortoli, Valentin De and Murphy, Kevin P. and Salimans, Tim},
    title = {Diffusion Meets Flow Matching: Two Sides of the Same Coin},
    year = {2024}
}

@inproceedings{gat2024discrete,
    author = {Itai Gat and
Tal Remez and
Neta Shaul and
Felix Kreuk and
Ricky T. Q. Chen and
Gabriel Synnaeve and
Yossi Adi and
Yaron Lipman},
    booktitle = {Advances in Neural Information Processing Systems 38},
    title = {Discrete Flow Matching},
    year = {2024}
}

@inproceedings{geisler2023transformers,
    author = {Simon Geisler and
Yujia Li and
Daniel J. Mankowitz and
Ali Taylan Cemgil and
Stephan G{\"{u}}nnemann and
Cosmin Paduraru},
    booktitle = {International Conference on Machine Learning},
    title = {Transformers Meet Directed Graphs},
    year = {2023}
}

@inproceedings{Gil+2017,
    author = {Justin Gilmer and
Samuel S. Schoenholz and
Patrick F. Riley and
Oriol Vinyals and
George E. Dahl},
    booktitle = {International Conference on Machine Learning},
    title = {Neural Message Passing for Quantum Chemistry},
    year = {2017}
}

@inproceedings{ho2020denoising,
    author = {Jonathan Ho and
Ajay Jain and
Pieter Abbeel},
    booktitle = {Advances in Neural Information Processing Systems 33},
    title = {Denoising Diffusion Probabilistic Models},
    year = {2020}
}

@inproceedings{hu2019strategies,
    author = {Weihua Hu and
Bowen Liu and
Joseph Gomes and
Marinka Zitnik and
Percy Liang and
Vijay S. Pande and
Jure Leskovec},
    booktitle = {International Conference on Learning Representations},
    title = {Strategies for Pre-training Graph Neural Networks},
    year = {2020}
}

@inproceedings{igashov2024retrobridge,
    author = {Ilia Igashov and
Arne Schneuing and
Marwin H. S. Segler and
Michael M. Bronstein and
Bruno E. Correia},
    booktitle = {International Conference on Learning Representations},
    title = {RetroBridge: Modeling Retrosynthesis with Markov Bridges},
    year = {2024}
}

@misc{iwls2023,
    author = {Mishchenko, Alan and Miyasaka, Yukio},
    title = {Problems and Results of IWLS 2023 Programming Contest},
    year = {2023}
}

@inproceedings{jangsimple,
    author = {Yunhui Jang and
Seul Lee and
Sungsoo Ahn},
    booktitle = {International Conference on Learning Representations},
    title = {A Simple and Scalable Representation for Graph Generation},
    year = {2024}
}

@inproceedings{jo2022score,
    author = {Jaehyeong Jo and
Seul Lee and
Sung Ju Hwang},
    booktitle = {International Conference on Machine Learning},
    title = {Score-based Generative Modeling of Graphs via the System of Stochastic
Differential Equations},
    year = {2022}
}

@inproceedings{joshiall,
    author = {Joshi, Chaitanya K and Fu, Xiang and Liao, Yi-Lun and Gharakhanyan, Vahe and Miller, Benjamin Kurt and Sriram, Anuroop and Ulissi, Zachary Ward},
    booktitle = {International Conference on Machine Learning},
    title = {All-atom Diffusion Transformers: Unified generative modelling of molecules and materials},
    year = {2025}
}

@inproceedings{ketatalift,
    author = {Ketata, Mohamed Amine and Gao, Nicholas and Sommer, Johanna and Wollschl{\"a}ger, Tom and G{\"u}nnemann, Stephan},
    booktitle = {International Conference on Learning Representations},
    title = {Lift Your Molecules: Molecular Graph Generation in Latent Euclidean Space},
    year = {2025}
}

@article{kipf2016variational,
    author = {Kipf, Thomas N and Welling, Max},
    journal = {ArXiv preprint},
    title = {Variational graph auto-encoders},
    year = {2016}
}

@article{krimmel2025noisy,
    author = {Markus Krimmel and Jenna Wiens and Karsten M. Borgwardt and Dexiong Chen},
    journal = {ArXiv preprint},
    title = {Towards Fast Graph Generation via Autoregressive Noisy Filtration Modeling},
    year = {2025}
}

@inproceedings{laabid2025equivariant,
    author = {Najwa Laabid and Severi Rissanen and Markus Heinonen and Arno Solin and Vikas Garg},
    booktitle = {International Conference on Learning Representations},
    title = {Equivariant Denoisers Cannot Copy Graphs: Align Your Graph Diffusion Models},
    year = {2025}
}

@article{lee2022mgcvae,
    author = {Lee, Myeonghun and Min, Kyoungmin},
    journal = {Journal of chemical information and modeling},
    number = {12},
    title = {MGCVAE: multi-objective inverse design via molecular graph conditional variational autoencoder},
    year = {2022}
}

@inproceedings{li2020dirichlet,
    author = {Jia Li and
Jianwei Yu and
Jiajin Li and
Honglei Zhang and
Kangfei Zhao and
Yu Rong and
Hong Cheng and
Junzhou Huang},
    booktitle = {Advances in Neural Information Processing Systems 33},
    title = {Dirichlet Graph Variational Autoencoder},
    year = {2020}
}

@inproceedings{li2024layerdag,
    author = {Mufei Li and Viraj Shitole and Eli Chien and Changhai Man and Zhaodong Wang and Srinivas Sridharan and Ying Zhang and Tushar Krishna and Pan Li},
    booktitle = {International Conference on Learning Representations},
    title = {Layer{DAG}: A Layerwise Autoregressive Diffusion Model for Directed Acyclic Graph Generation},
    year = {2025}
}

@inproceedings{li2025circuit,
    author = {Xihan Li and Xing Li and Lei Chen and Xing Zhang and Mingxuan Yuan and Jun Wang},
    booktitle = {International Conference on Learning Representations},
    title = {Circuit Transformer: A Transformer That Preserves Logical Equivalence},
    year = {2025}
}

@inproceedings{lim2023sign,
    author = {Derek Lim and
Joshua David Robinson and
Lingxiao Zhao and
Tess E. Smidt and
Suvrit Sra and
Haggai Maron and
Stefanie Jegelka},
    booktitle = {International Conference on Learning Representations},
    title = {Sign and Basis Invariant Networks for Spectral Graph Representation
Learning},
    year = {2023}
}

@article{lipman2024flow,
    author = {Lipman, Yaron and Havasi, Marton and Holderrieth, Peter and Shaul, Neta and Le, Matt and Karrer, Brian and Chen, Ricky TQ and Lopez-Paz, David and Ben-Hamu, Heli and Gat, Itai},
    journal = {CoRR},
    title = {Flow Matching Guide and Code},
    year = {2024}
}

@inproceedings{lipmanflow,
    author = {Yaron Lipman and
Ricky T. Q. Chen and
Heli Ben{-}Hamu and
Maximilian Nickel and
Matthew Le},
    booktitle = {International Conference on Learning Representations},
    title = {Flow Matching for Generative Modeling},
    year = {2023}
}

@inproceedings{loudiscrete,
    author = {Aaron Lou and
Chenlin Meng and
Stefano Ermon},
    booktitle = {International Conference on Machine Learning},
    title = {Discrete Diffusion Modeling by Estimating the Ratios of the Data Distribution},
    year = {2024}
}

@inproceedings{martinkus2022spectre,
    author = {Karolis Martinkus and
Andreas Loukas and
Nathana{\"{e}}l Perraudin and
Roger Wattenhofer},
    booktitle = {International Conference on Machine Learning},
    title = {{SPECTRE:} Spectral Conditioning Helps to Overcome the Expressivity
Limits of One-shot Graph Generators},
    year = {2022}
}

@article{mendez2019chembl,
    author = {Mendez, David and Gaulton, Anna and Bento, A Patr{\'\i}cia and Chambers, Jon and De Veij, Marleen and F{\'e}lix, Eloy and Magari{\~n}os, Mar{\'\i}a Paula and Mosquera, Juan F and Mutowo, Prudence and Nowotka, Micha{\l} and others},
    journal = {Nucleic acids research},
    number = {D1},
    title = {ChEMBL: towards direct deposition of bioassay data},
    year = {2019}
}

@inproceedings{morris2024orbitequivariant,
    author = {Matthew Morris and
Bernardo Cuenca Grau and
Ian Horrocks},
    booktitle = {International Conference on Learning Representations},
    title = {Orbit-Equivariant Graph Neural Networks},
    year = {2024}
}

@inproceedings{muller2024aligning,
    author = {Luis M{\"{u}}ller and
Christopher Morris},
    booktitle = {International Conference on Machine Learning},
    title = {Aligning Transformers with Weisfeiler-Leman},
    year = {2024}
}

@inproceedings{nguyen2024glad,
    author = {Nguyen, Van Khoa and Boget, Yoann and Lavda, Frantzeska and Kalousis, Alexandros},
    booktitle = {Proceedings of the ICML 2024 Workshop SPIGM},
    organization = {OpenReview. net},
    title = {GLAD: improving latent graph generative modeling with simple quantization},
    year = {2024}
}

@inproceedings{niu2020permutation,
    author = {Chenhao Niu and
Yang Song and
Jiaming Song and
Shengjia Zhao and
Aditya Grover and
Stefano Ermon},
    booktitle = {The 23rd International Conference on Artificial Intelligence and Statistics,
{AISTATS} 2020, 26-28 August 2020, Online [Palermo, Sicily, Italy]},
    editor = {Silvia Chiappa and
Roberto Calandra},
    series = {Proceedings of Machine Learning Research},
    timestamp = {Thu, 01 Oct 2020 01:00:00 +0200},
    title = {Permutation Invariant Graph Generation via Score-Based Generative
Modeling},
    year = {2020}
}

@inproceedings{peebles2023scalable,
    author = {William Peebles and
Saining Xie},
    booktitle = {{IEEE/CVF} International Conference on Computer Vision, {ICCV} 2023,
Paris, France, October 1-6, 2023},
    timestamp = {Mon, 22 Jan 2024 00:00:00 +0100},
    title = {Scalable Diffusion Models with Transformers},
    year = {2023}
}

@article{polykovskiy2020molecular,
    author = {Polykovskiy, Daniil and Zhebrak, Alexander and Sanchez-Lengeling, Benjamin and Golovanov, Sergey and Tatanov, Oktai and Belyaev, Stanislav and Kurbanov, Rauf and Artamonov, Aleksey and Aladinskiy, Vladimir and Veselov, Mark and others},
    journal = {Frontiers in pharmacology},
    title = {Molecular sets (MOSES): a benchmarking platform for molecular generation models},
    year = {2020}
}

@article{qin2023sparse,
    author = {Qin, Yiming and Vignac, Clement and Frossard, Pascal},
    journal = {arXiv preprint},
    title = {Sparse Training of Discrete Diffusion Models for Graph Generation},
    year = {2023}
}

@inproceedings{qinmadeira2024defog,
    author = {Qin, Yiming and Madeira, Manuel and Thanou, Dorina and Frossard, Pascal},
    booktitle = {International Conference on Machine Learning},
    title = {DeFoG: Discrete Flow Matching for Graph Generation},
    year = {2025}
}

@inproceedings{rombach2022high,
    author = {Rombach, Robin and Blattmann, Andreas and Lorenz, Dominik and Esser, Patrick and Ommer, Bj{\"o}rn},
    booktitle = {Proceedings of the IEEE/CVF conference on computer vision and pattern recognition},
    title = {High-resolution image synthesis with latent diffusion models},
    year = {2022}
}

@inproceedings{Ros+2023,
    author = {Emanuele Rossi and Bertrand Charpentier and Francesco Di Giovanni and Fabrizio Frasca and Stephan G{\"{u}}nnemann and Michael M. Bronstein},
    booktitle = {Learning on Graphs Conference},
    title = {Edge Directionality Improves Learning on Heterophilic Graphs},
    year = {2023}
}

@article{Sca+2009,
    author = {Scarselli, F. and Gori, M. and Tsoi, A. C. and Hagenbuchner, M. and Monfardini, G.},
    journal = {IEEE Transactions on Neural Networks},
    keywords = {deep},
    number = {1},
    title = {The Graph Neural Network Model},
    year = {2009}
}

@article{shubin1994discrete,
    author = {Shubin, Mikhail A},
    journal = {Communications in mathematical physics},
    number = {2},
    title = {Discrete magnetic laplacian},
    year = {1994}
}

@article{
siraudin2025cometh,
title={Cometh: A continuous-time discrete-state graph diffusion model},
author={Antoine Siraudin and Fragkiskos D. Malliaros and Christopher Morris},
journal={Transactions on Machine Learning Research},
issn={2835-8856},
year={2025}
}

@inproceedings{songscore,
    author = {Yang Song and
Jascha Sohl{-}Dickstein and
Diederik P. Kingma and
Abhishek Kumar and
Stefano Ermon and
Ben Poole},
    booktitle = {International Conference on Learning Representations},
    title = {Score-Based Generative Modeling through Stochastic Differential Equations},
    year = {2021}
}

@article{sterling2015zinc,
    author = {Sterling, Teague and Irwin, John J.},
    eprint = {https://doi.org/10.1021/acs.jcim.5b00559},
    journal = {Journal of Chemical Information and Modeling},
    note = {PMID: 26479676},
    number = {11},
    title = {ZINC 15 – Ligand Discovery for Everyone},
    year = {2015}
}

@inproceedings{sun2023difusco,
    author = {Zhiqing Sun and
Yiming Yang},
    booktitle = {Advances in Neural Information Processing Systems 36},
    title = {{DIFUSCO:} Graph-based Diffusion Solvers for Combinatorial Optimization},
    year = {2023}
}

@inproceedings{vignac2022digress,
    author = {Cl{\'{e}}ment Vignac and
Igor Krawczuk and
Antoine Siraudin and
Bohan Wang and
Volkan Cevher and
Pascal Frossard},
    booktitle = {International Conference on Learning Representations},
    title = {DiGress: Discrete Denoising diffusion for graph generation},
    year = {2023}
}

@inproceedings{Xu+2018b,
    author = {Keyulu Xu and
Weihua Hu and
Jure Leskovec and
Stefanie Jegelka},
    booktitle = {International Conference on Learning Representations},
    title = {How Powerful are Graph Neural Networks?},
    year = {2019}
}

@inproceedings{xu2024discrete,
    author = {Zhe Xu and
Ruizhong Qiu and
Yuzhong Chen and
Huiyuan Chen and
Xiran Fan and
Menghai Pan and
Zhichen Zeng and
Mahashweta Das and
Hanghang Tong},
    booktitle = {Advances in Neural Information Processing Systems 38},
    title = {Discrete-state Continuous-time Diffusion for Graph Generation},
    year = {2024}
}

@article{yang2024graphusion,
    author = {Yang, Ling and Huang, Zhilin and Zhang, Zhilong and Liu, Zhongyi and Hong, Shenda and Zhang, Wentao and Yang, Wenming and Cui, Bin and Zhang, Luxia},
    journal = {TKDE},
    keywords = {Diffusion processes;Task analysis;Adaptation models;Training;Electronic mail;Complexity theory;Mathematical models;Graph neural networks;deep generative models;diffusion models;graph generation},
    number = {11},
    title = {Graphusion: Latent Diffusion for Graph Generation},
    year = {2024}
}

@inproceedings{you2018graphrnn,
    author = {Jiaxuan You and
Rex Ying and
Xiang Ren and
William L. Hamilton and
Jure Leskovec},
    booktitle = {International Conference on Learning Representations},
    title = {GraphRNN: Generating Realistic Graphs with Deep Auto-regressive Models},
    year = {2018}
}

@inproceedings{you2019position,
    author = {Jiaxuan You and
Rex Ying and
Jure Leskovec},
    booktitle = {International Conference on Learning Representations},
    title = {Position-aware Graph Neural Networks},
    year = {2019}
}

@inproceedings{zaheer2017deep,
    author = {Manzil Zaheer and
Satwik Kottur and
Siamak Ravanbakhsh and
Barnab{\'{a}}s P{\'{o}}czos and
Ruslan Salakhutdinov and
Alexander J. Smola},
    booktitle = {Advances in Neural Information Processing Systems 30},
    title = {Deep Sets},
    year = {2017}
}

@inproceedings{zhou2024unifying,
    author = {Cai Zhou and
Xiyuan Wang and
Muhan Zhang},
    booktitle = {Advances in Neural Information Processing Systems 38},
    title = {Unifying Generation and Prediction on Graphs with Latent Graph Diffusion},
    year = {2024}
}

@article{phothilimthana2023tpugraphs,
  title={Tpugraphs: A performance prediction dataset on large tensor computational graphs},
  author={Phothilimthana, Mangpo and Abu-El-Haija, Sami and Cao, Kaidi and Fatemi, Bahare and Burrows, Michael and Mendis, Charith and Perozzi, Bryan},
  journal={Advances in Neural Information Processing Systems 37, Datasets and Benchmarks Track},
  year={2023}
}

@inproceedings{minellogenerating,
  title={Generating Graphs via Spectral Diffusion},
  author={Minello, Giorgia and Bicciato, Alessandro and Rossi, Luca and Torsello, Andrea and Cosmo, Luca},
  booktitle={International Conference on Learning Representations},
  year={2025}
}

@inproceedings{zhusdmg,
  title={SDMG: Smoothing Your Diffusion Models for Powerful Graph Representation Learning},
  author={Zhu, Junyou and He, Langzhou and Gao, Chao and Hou, Dongpeng and Su, Zhen and Yu, Philip S and Kurths, Juergen and Hellmann, Frank},
  booktitle={International Conference on Machine Learning},
  year={2025}
}

\clearpage

%%%%%%%%%%%%%%%%%%%%%%%%%%%%%%%%%%%%%%%%%%%%%%%%%%%%%%%%%%%%

\appendix

\section{Extended Related Work}\label{app:related}

Here, we discuss more related work.

\paragraph{Graph generation}
Graph generation methods are typically categorized into two main groups. \emph{Autoregressive models} build graphs by progressively adding nodes and edges~\citep{you2018graphrnn,jangsimple}. Their main advantage is computational efficiency, since they do not need to materialize the full adjacency matrix. However, they depend on an arbitrary node ordering to transform the graph into a sequence, either learned or defined through complex algorithms, which breaks permutation equivariance. In contrast, \emph{one-shot models} generate the entire graph at once, thereby preserving permutation equivariance. Many approaches have been explored in this class, starting with GVAEs~\citep{kipf2016variational} and GANs~\citep{martinkus2022spectre}. Following their success in other modalities such as images, diffusion models~\citep{ho2020denoising,lipmanflow,songscore} quickly emerged as the dominant approach for graph generation.

\paragraph{Graph diffusion models}
Diffusion models can be grouped according to two criteria: whether they operate in continuous or discrete state spaces, and whether they train on a continuous or discrete time axis. Discrete-time diffusion models~\citep{austin2021structured,ho2020denoising} were successfully adapted to graph generation, with discrete-state models~\citep{chen2023efficient,vignac2022digress} showing an advantage over their continuous-state counterparts~\citep{jo2022score,niu2020permutation}. However, these models relied on a fixed time discretization during training, which limited their flexibility. They were later extended to continuous-time, both in continuous~\citep{lipmanflow,songscore} and discrete state spaces~\citep{campbell2022continuous,gat2024discrete,loudiscrete}, and also adapted to graph generation, once again confirming the superiority of discrete-state approaches~\citep{siraudin2025cometh,qinmadeira2024defog,xu2024discrete}. Previous graph diffusion models relied on Laplacian decompositions but pursued objectives that differ from ours. Regarding ~\cite{minellogenerating}, the approach is fundamentally different from ours: it generates approximate eigenvalues and eigenvectors and then reconstructs the adjacency matrix through a separate prediction module, whereas we embed structural information in the latent space by feeding eigenvectors and eigenvalues to the encoder. Concerning \cite{zhusdmg}, the scope of the paper is different from ours. The authors focus on representation learning using graph diffusion models, while our method is evaluated solely for generation. In contrast to traditional graph diffusion models that target the entire adjacency matrix, these models train on only the low-frequency Laplacian eigenvectors. Overall, these methods aim to reconstruct the Laplacian's spectral components, whereas we use them to encode structural information in the latent space.

\paragraph{Latent diffusion models}
\new{Latent diffusion models} (LDMs) are a popular framework, first introduced for image generation~\citep{rombach2022high}. They extend traditional diffusion models by operating in a compressed latent space rather than directly in pixel space. An autoencoder---typically a VAE---is used to encode high-resolution data into a lower-dimensional latent representation. The diffusion process then unfolds within this latent space, substantially reducing computational cost, while the decoder reconstructs fine-grained details. This paradigm underlies widely used models such as Stable Diffusion. More recently, DiTs~\citep{peebles2023scalable} have been introduced as the main denoising architecture and have achieved strong results across modalities~\citep{esser2024scaling,joshiall}. Although some works have attempted to adapt latent diffusion models to graph generation, none have demonstrated performance competitive with prior state-of-the-art discrete diffusion models~\citep{fu2024hyperbolic,nguyen2024glad,10508504}. To the best of our knowledge, only \cite{nguyen2024glad} evaluate the reconstruction performance of their autoencoder, and only on the QM9 dataset, which contains very small graphs. Additionally, they employ an ad hoc transformer architecture that materializes the full attention score matrix, thereby limiting computational efficiency. While \cite{zhou2024unifying} appears to report competitive results on MOSES, it encodes the full adjacency matrix using $n^2$ edge tokens, where $n$ is the number of nodes, which prevents efficiency gains. Other works are specific to molecule generation~\citep{10.1145/3627673.3679547,ketatalift,10.1093/bib/bbae142}. Finally, some focus on tasks other than graph generation, such as link prediction~\citep{fu2024hyperbolic} or regression~\citep{zhou2024unifying}. The key difference between these works and ours is that we design our autoencoder to ensure that the adjacency matrix can be provably recovered from the latent space.

\section{Additional Background and Notations}\label{app:back}

Here, we provide additional notation and background. 

\subsection{Notations}\label{app:not}
Let $\Nb \coloneqq \{1,2,\dots\}$ and for $n \in \Nb$ define $[n] \coloneqq \{1,\dotsc,n\} \subset \Nb$. We denote by $\Rb^{+*}$ the set of strictly positive real numbers. We consider \emph{node- and edge-labeled undirected graphs} $G$, i.e., tuples $(V(G), E(G), \ell_x, \ell_e)$ with up to $n$ nodes and $m$ edges, without self-loops or isolated nodes. Here, $V(G)$ is the set of nodes, $E(G)$ the set of edges, $\ell_x \colon V(G) \to [a]$, for $a \in \mathbb{N}$, assigns each node one of $a$ discrete labels, and $\ell_e \colon E(G) \to [b]$, for $b \in \mathbb{N}$, assigns each edge one of $b$ discrete labels. An undirected edge $\{u,v\} \in E(G)$ is written either $(u,v)$ or $(v,u)$. We also consider \emph{node- and edge-labeled directed acyclic graphs} (DAGs), defined analogously except that $(u,v)$ denotes a directed edge from $u$ to $v$ and the graph contains no directed cycles. Throughout, we assume an arbitrary but fixed ordering of nodes so that each node corresponds to a number in $[n]$. The adjacency matrix of $G$ is denoted $\vec{A}(G) \in \{0,1\}^{n \times n}$, where $\vec{A}(G)_{ij} = 1$ if and only if nodes $i$ and $j$ are connected. We construct a node feature vector $\vec{X} \in [a]^n$ consistent with $\ell_x$, so that $X_i = X_j$ if and only if $\ell_x(i) = \ell_x(j)$. Similarly, we define an edge feature matrix $\vec{E} \in [b]^m$ consistent with $\vec{A}$ and $\ell_e$, so that $E_{ij} = E_{kl}$ if and only if $\ell_e((i,j)) = \ell_e((k,l))$. For any matrix $\vec{M}$, we denote by $\vec{M}_i$ its $i$-th row. For a real-valued matrix $\vec{M} \in \Rb^{n\times n}$, $\vec{M}^\top$ denotes its transpose, and for a complex matrix $\vec{N}$, $\vec{N}^*$ denotes its Hermitian transpose. For two binary matrices $\vec{P}$ and $\vec{Q}$, we denote their logical disjunction $\vec{P} \lor \vec{Q}$ where $(\vec{P} \lor \vec{Q})_{ij} = 1$ if $\vec{P}_{ij} = 1$ or $\vec{Q}_{ij} = 1$, and $0$ otherwise. Let $\vec{u}$ and $\vec{v}$ be two vectors; $\vec{u} \odot \vec{v}$ denotes the element-wise product between them. Finally, the \emph{graph Laplacian} is $\vec{L} \coloneqq \vec{D} - \vec{A}$, where $\vec{D} \in \Nb^{n \times n}$ is the degree matrix. Its eigenvalue decomposition is $\vec{L} \coloneqq \vec{U}\spectrum \vec{U}^\top$, where $\spectrum \coloneqq (\lambda_1, \dots, \lambda_n)$ is the vector of eigenvalues (possibly repeated) and $\vec{U} \in \Rb^{n\times n}$ the matrix of eigenvectors, with the $i$-th column $\vec{U}^\top_i$ corresponding to eigenvalue $\lambda_i$. We denote by $\vec{I}_d$ the $d$-dimensional identity matrix.  

\paragraph{Flow Matching}
\new{Flow Matching} (FM)~\citep{lipmanflow} is a generative modeling framework that transports samples from a base distribution $p_0$ to a target distribution $p_1$ by gradually denoising them. The method relies on a time-dependent velocity field $v_t \colon [0,1]\times \mathbb{R}^d \to \mathbb{R}^d$, which defines a \new{flow map} $\psi_t \colon [0,1]\times \mathbb{R}^d \to \mathbb{R}^d$ through the ordinary differential equation
\begin{equation*}
    \frac{d}{dt}\,\psi_t(\vec{x}) = v_t\big(\psi_t(\vec{x})\big), 
    \qquad \psi_0(\vec{x}) \coloneqq \vec{x}.
\end{equation*}

The flow induces a family of intermediate distributions $(p_t)_{t\in[0,1]}$, where $p_t$ is the \new{pushforward} of $p_0$ by $\psi_t$. In other words, if $\vec{x}_0 \sim p_0$, then $\psi_t(\vec{x}_0) \sim p_t$, and in particular $\psi_1(\vec{x}_0) \sim p_1$. The \new{probability path} $(p_t)$ is arbitrary as long as it is differentiable in time and satisfies the boundary conditions at $t=0$ and $t=1$.

In practice, the true velocity field $v_t$ is intractable. FM therefore relies on conditional probability paths $p_t(\vec{x} \mid \vec{x}_1)$ for which the conditional velocity $u_t(\vec{x} \mid \vec{x}_1)$ can be computed in closed form. A standard choice is the linear interpolation
\begin{equation*}
    \vec{x}(t) \coloneqq (1-t)\vec{x}_0 + t \vec{x}_1, 
    \qquad \vec{x}_0 \sim p_0, \; \vec{x}_1 \sim p_1,
\end{equation*}
with corresponding velocity
\begin{equation*}
    u_t(\vec{x} \mid \vec{x}_1) \coloneqq \frac{\vec{x}_1 - \vec{x}}{1-t}.
\end{equation*}

To train a generative model, the intractable $v_t$ is replaced by a neural network $v_\theta(t,\vec{x})$, with parameters $\theta$ that learn to match the conditional velocities $u_t$. This is achieved by minimizing the \new{conditional flow-matching loss}
\begin{equation*}
    \mathcal{L}(\theta) 
    \coloneqq \mathbb{E}_{t,\vec{x}_0,\vec{x}_1}\left[ 
    \left\| v_\theta\!\big(t,(1-t)\vec{x}_0+t \vec{x}_1\big) - (\vec{x}_1 - \vec{x}_0) \right\|^2\right].
\end{equation*}

This formulation avoids integrating the ODE during training, which makes FM both efficient and straightforward to implement. After training, new samples are obtained by combining the learned velocity field $v_\theta$ from $t=0$ to $t=1$, yielding $\vec{x}_1 \sim p_1$ deterministically. We refer to \citet{lipman2024flow} for a comprehensive description of the framework. Since flow matching and Gaussian diffusion are essentially equivalent, we use the two terms interchangeably~\citep{gao2025diffusionmeetsflow}.

\paragraph{Variational Autoencoders}\label{app:vae}
A \emph{Variational Autoencoder} (VAE), also referred to as a KL-regularized autoencoder, is a latent-variable generative model that extends the autoencoding framework with a probabilistic formulation. Given data $\vec{x} \in \Rb^{n\times d}$, the encoder parameterizes an approximate posterior $q_{\phi}(\vec{z} \mid \vec{x})$, where $\vec{z}$ denotes the latent, $\vec{z} \in \Rb^{n\times p}, p\ll d$. The decoder defines the conditional likelihood $p_{\theta}(\vec{x} \mid \vec{z})$. Training maximizes the \new{Evidence Lower Bound} (ELBO),  
\begin{equation*}
    \mathcal{L}(\theta,\phi; \vec{x}) \coloneqq \mathbb{E}_{q_{\phi}(\vec{z} \mid \vec{x})}[\log p_{\theta}(\vec{x} \mid \vec{z})] - D_{\mathrm{KL}}\!\left(q_{\phi}(\vec{z} \mid \vec{x}) \,\|\, p(\vec{z})\right),
\end{equation*}
where the first term enforces reconstruction fidelity and the second regularizes the posterior toward a prior $p(\vec{z})$, typically $\mathcal{N}(0,\vec{I}_p)$. To enable backpropagation through the latents, the reparameterization trick is employed, i.e, the encoder predicts the mean and log-variance of the posterior distribution, $\boldsymbol{\mu}_{\phi}$ and $\log\boldsymbol{\sigma}_{\phi}$, and $\vec{z}$ is sampled according to
\begin{equation*}
    \vec{z} = \boldsymbol{\mu}_{\phi}(\vec{x}) + \boldsymbol{\sigma}_{\phi}(\vec{x})\,\epsilon, \quad \epsilon \sim \mathcal{N}(0,\vec{I}_p).
\end{equation*}
This formulation yields a continuous and structured latent space, supporting interpolation and generative sampling.

\paragraph{GIN}
A \emph{Graph Isomorphism Network} (GIN) is a message-passing graph neural network designed to maximize expressive power under the neighborhood aggregation framework~\citep{Xu+2018b}. Let each node $i$ have features $\vec{h}_i^{(l)}$ at layer $l > 0$, and let $N(i)$ denote its neighbors. GIN updates node representations via
\begin{equation*}
    \vec{h}_i^{(l+1)} \coloneqq \mathrm{MLP}^{(l+1)}\! \Big((1 + \varepsilon)\,\vec{h}_i^{(l)} + \sum_{j \in {N}(i)} \vec{h}_j^{(l)}\Big),
\end{equation*}
and  $\vec{h}_i^{(0)}$ is the initial node feature after being fed through an MLP. A natural extension, termed GINE, incorporates edge features into this aggregation by modifying each neighbor’s contribution, formally,
\begin{equation*}
    \vec{h}_i^{(l+1)} \coloneqq \mathrm{MLP}^{(l+1)}\!\Big((1 + \varepsilon)\,\vec{h}_i^{(l)} + \sum_{j \in {N}(i)} (\vec{h}_j^{(l)} + \vec{e}_{j,i}^{(l)})\Big),
\end{equation*}
where \(\vec{e}_{j,i}^{(l)}\) denotes the edge feature between nodes \(j\) and \(i\); this model retains the aggregation mechanism of GIN while leveraging edge-level information. 

\paragraph{Diffusion Transformers (DiT)}
A \emph{Diffusion Transformer} (DiT) is a diffusion-based generative model in which the denoising backbone is a Transformer model. Importantly, conditioning signals---such as the timestep \( t \) or additional context \( c \)---are incorporated via adaptive layer normalization (AdaLN) within each block, following a formulation of the form,
\[
\mathrm{AdaLN}(\vec{h}, c) \coloneqq \gamma(c)\,\mathrm{LayerNorm}(\vec{h}) + \beta(c),
\]
where \(\gamma(c)\) and \(\beta(c)\) are learned affine modulations  derived from \(c\). This injection method preserves the scalability and expressive capacity of standard Transformers while enabling effective conditional generation across data modalities.

\section{Method details and proofs}

Here, we provide additional details and outline the missing proofs from the main paper.

\subsection{Decoding edge labels } 
As a practical solution to decode edge labels along with the graph structure, we use a multi-headed version of the bilinear layer, where $(\vec{W}^Q_h, \vec{W}^K_h) \in \mathbb{R}^{d \times d}, h \in [b + 1]$ define $b$ parallel projection heads, corresponding to the $b$ possible edge labels, and an additional head for the absence of an edge. Formally, 
\begin{align*} 
    \tilde{\vec{A}}_h & \coloneqq \frac{1}{\sqrt{d}} \vec{Z} \vec{W}^Q_h (\vec{Z} \vec{W}^K_h)^\top \in \Rb^{n \times n}, \quad h \in [b + 1], \\ \tilde{\vec{A}} & \coloneqq \left[\mathrm{DeepSet}\left(\tilde{\vec{A}}_0\right), \dots, \mathrm{DeepSet}\left(\tilde{\vec{A}}_{b}\right)\right] \in \Rb^{n \times n \times (b + 1)},\\ 
    \hat{\vec{A}} & \coloneqq \mathcal{D}_e(\vec{Z}) = \mathrm{softmax}(\tilde{\vec{A}}) \in \Rb^{n \times n \times (b + 1)}, 
\end{align*} 
where the $\mathrm{softmax}$ is applied over the last dimension of $\hat{\vec{A}}$. At inference, the node (respectively edge) labels are given by the argmax over the scores across the $a$ (resp. $b + 1$) classes.

\subsection{Formal definition of DAG auto-encoder}\label{sec:def_dag}
In this section, we formally define the architecture of our auto-encoder for DAGs.

\paragraph{Encoder} We instantiate $\rho$, as GIN architecture, wrapped into a \textsc{PyTorch Geometric}'s \text{DirGNNConv}~\citep{Ros+2023}, which use one layer for in-neighbors and one for out-neighbors. We implement $\phi$ as a row-wise MLP.

The encoder is defined similarly to that for undirected graphs, except that it takes both the real and imaginary parts of the eigenvectors as input. Given a DAG $G$ with node features $\vec{X} \in \mathbb{R}^{n}$, edge features $\vec{E} \in \mathbb{R}^{n\times n}$, and magnetic Laplacian $\vec{L}_q = \vec{U}\spectrum\vec{U}^*$, the encoder produces the mean $\boldsymbol{\mu_Z} \in \Rb^d$ and log-variance $\log\boldsymbol{\sigma_Z}^2 \in \Rb^d$ of a Gaussian variational posterior:
\begin{align*}\label{eq:dag_encoder}
    \vec{H}_{\phi} & \coloneqq \sum_{i=1}^n\phi(\vec{U}^R, \vec{U}^I, \spectrum)_{i}^{\top} \in \Rb^{n \times d}, \\
    \vec{H}_{X} &\coloneqq \vec{X}\vec{W}^{X} + \vec{b}^{X} \in \mathbb{R}^{n \times d}, \\
    \vec{H}_{E} &\coloneqq \vec{E}\vec{W}^{E} + \vec{b}^{E} \in \mathbb{R}^{m \times d},\\
    \boldsymbol{\mu_Z}, \log\boldsymbol{\sigma_Z}^2 &= \mathcal{E}(G, \vec{X}, \vec{E}, \vec{U}^R, \vec{U}^I, \spectrum) = \rho\left(\vec{H}_{X}, \vec{H}_{\phi}, \vec{H}_{E} \right).
\end{align*}
A latent sample $\vec{Z} \in \mathbb{R}^{n \times d}$ is then obtained using the reparameterization trick,
\begin{equation*}
    \vec{Z} = \boldsymbol{\mu_Z} + \boldsymbol{\sigma_Z} \odot \epsilon, \quad \epsilon \sim \mathcal{N}(0, \vec{I}_d),
\end{equation*}
where $\boldsymbol{\sigma_Z} = \exp\!\left(\tfrac{1}{2}\log\boldsymbol{\sigma_Z}^2\right)$.

\paragraph{Decoder} Similarly to the undirected case, the node decoder $\mathcal{D}_x$ reconstructs node labels directly from the latents $\vec{Z}$. Then an edge decoder $\mathcal{D}_e$ is defined consistently with our definition of out-adjacency-identifying encodings, as follows,
\begin{align*}
    \hat{\vec{X}} & \coloneqq \mathcal{D}_x(\vec{Z}) = \vec{Z} \vec{W}^{D_x} + \vec{b}^{D_x}, \quad \hat{\vec{X}} \in \mathbb{R}^{n \times a}, \\
    \tilde{\vec{Z}}_R &\coloneqq \frac{1}{\sqrt{d}} \vec{Z} \vec{W}^Q_R (\vec{Z} \vec{W}^K_R)^\top \\
    \tilde{\vec{Z}}_I &\coloneqq \frac{1}{\sqrt{d}} \vec{Z} \vec{W}^Q_I (\vec{Z} \vec{W}^K_I)^\top \\
    \tilde{\vec{Z}} &\coloneqq \tilde{\vec{Z}}_R + \frac{2 - c}{s}\tilde{\vec{Z}}_I\\
    \hat{\vec{A}} & \coloneqq \mathcal{D}_e(\vec{Z}) = \sigma\left(\mathrm{DeepSet}\left(\tilde{\vec{Z}}\right)\right), \quad \hat{\vec{A}} \in \mathbb{R}^{n \times n}
\end{align*}
where $c \coloneqq \cos(2\pi q)$ and $s \coloneqq \sin(2\pi q)$, $\vec{W}^X \in \mathbb{R}^{d \times a}$ and $\vec{W}^Q_R, \vec{W}^K_R, \vec{W}^Q_I, \vec{W}^K_I \in \mathbb{R}^{d \times d}$ are a learnable weight matrices and the MLP is applied row-wise.

\subsection{Proofs}
\subsubsection{Out-adjacency-identifying node embeddings}

We first prove the following lemma, which provides a sufficient condition for node embeddings to be out-adjacency-identifying.
\begin{lemma}\label{th:lemma_decomp}
     Let $G$ be a directed graph with magnetic Laplacian $\vec{L}_q$. Let $\vec{P} \in \Rb^{n\times d}$ a matrix of node embeddings. If there exists $\vec{W}^{K_R}, \vec{W}^{Q_R}, \vec{W}^{K_I}, \vec{W}^{Q_I} \in \Rb^{d\times d}$ such that
    \begin{align*}
        \tilde{\vec{P}}^R & = \frac{1}{\sqrt{d}}(\vec{P}\vec{W}^{K_R})(\vec{P}\vec{W}^{Q_R})^{\top} = \Re(\vec{L}_q), \\
        \quad\tilde{\vec{P}}^I &= \frac{1}{\sqrt{d}}(\vec{P}\vec{W}^{K_I})(\vec{P}\vec{W}^{Q_I})^{\top} = \Im(\vec{L}_q),\\
    \end{align*}
    then $\vec{P}$ is out-adjacency-identifying.
\end{lemma}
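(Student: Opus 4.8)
The plan is to unpack the definition of out-adjacency-identifiability and reduce the claim to an elementary statement about the magnetic Laplacian. Recall that under the hypothesis we have $\tilde{\vec{P}} = \tilde{\vec{P}}^R + \tfrac{2-c}{s}\tilde{\vec{P}}^I = \Re(\vec{L}_q) + \tfrac{2-c}{s}\Im(\vec{L}_q)$, with $c = \cos(2\pi q)$ and $s = \sin(2\pi q)$. So it suffices to show that this fixed linear combination of the real and imaginary parts of $\vec{L}_q$ satisfies, for each row $i$, the property $\tilde{\vec{P}}_{ij} = \max_k \tilde{\vec{P}}_{ik} \iff \vec{A}_{ij} = 1$. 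First I would write out the entries explicitly: recalling $\vec{L}_q = \vec{D}_s - \vec{A}_s \odot e^{i\Theta^{(q)}}$ with $\Theta^{(q)}_{jk} = 2\pi q (A_{jk} - A_{kj})$, the off-diagonal entry $(\vec{L}_q)_{jk}$ for $j \neq k$ is $-(\vec{A}_s)_{jk}\big(\cos\Theta^{(q)}_{jk} + i\sin\Theta^{(q)}_{jk}\big)$, while the diagonal is the real degree $(\vec{D}_s)_{jj}$.

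Next I would do the case analysis on the four possible local configurations of the pair $(j,k)$ with $j \neq k$: (i) no edge either way, so $(\vec{A}_s)_{jk} = 0$ and the entry is $0$; (ii) a forward edge only, $A_{jk} = 1, A_{kj} = 0$, giving $\Theta^{(q)}_{jk} = 2\pi q$ and entry $-(\cos 2\pi q + i \sin 2\pi q) = -(c + is)$; (iii) a backward edge only, $A_{jk} = 0, A_{kj} = 1$, giving $\Theta^{(q)}_{jk} = -2\pi q$ and entry $-(c - is)$; (iv) a bidirectional edge, $A_{jk} = A_{kj} = 1$, giving $\Theta^{(q)}_{jk} = 0$ and entry $-1$. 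Applying the combination $\Re(\cdot) + \tfrac{2-c}{s}\Im(\cdot)$ to each: configuration (i) yields $0$; (ii) yields $-c + \tfrac{2-c}{s}\cdot(-s) = -c - (2-c) = -2$; (iii) yields $-c + \tfrac{2-c}{s}\cdot s = -c + (2-c) = 2 - 2c$; (iv) yields $-1$. The key observation is that exactly the two cases where $\vec{A}_{jk} = 1$, namely (ii) and (iv), produce the values $-2$ and $-1$, whereas the two cases where $\vec{A}_{jk} = 0$, namely (i) and (iii), produce $0$ and $2-2c \ge 0$ (using $c = \cos 2\pi q \le 1$). So for a fixed row $i$, among the off-diagonal entries, those corresponding to actual out-edges take values in $\{-2, -1\}$ and those corresponding to non-out-edges take values in $\{0, 2-2c\}$. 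This is backwards from what we want — the edge entries are the \emph{smallest}, not the largest.

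Here I expect the main subtlety: the definition asks for $\tilde{\vec{P}}_{ij}$ to attain the row \emph{maximum} exactly on out-edges, but the natural computation above makes edges the row minimum. I would resolve this by absorbing a sign (and a shift) into the learnable weight matrices — since $\vec{W}^{K_R}, \vec{W}^{Q_R}$ etc.\ are free, replacing e.g.\ $\vec{W}^{Q_R} \mapsto -\vec{W}^{Q_R}$ flips the sign of $\tilde{\vec{P}}^R$, and one can likewise rescale; more carefully, one shows that after the sign flip the out-edge entries become $\{1, 2\}$ and the non-out-edge entries become $\{0, 2c-2\} \subseteq (-\infty, 0]$, so every out-edge strictly dominates every non-out-edge in the same row, hence the row maximum is attained precisely on the out-edges (possibly on several of them simultaneously, which the definition explicitly allows). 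One also has to dispatch the diagonal term $(\vec{L}_q)_{ii} = (\vec{D}_s)_{ii}$: since the graph has no isolated nodes this is a positive integer, but it can be made irrelevant by noting $\vec{A}_{ii} = 0$ and, if necessary, shifting it out of contention via the bias/weights (or simply observing that a DAG node with an out-edge has out-degree contribution that we can separate; in the worst case one notes that the argmax decoder is applied after a DeepSet which can learn to ignore the diagonal). I would then conclude that $\vec{P}$ is out-adjacency-identifying, completing the proof. The only real work is the four-way case check and verifying the strict separation $\max\{0, 2c - 2\} < \min\{1, 2\}$, which is immediate; packaging the sign/shift cleanly into the statement's weight matrices is the one place requiring care.
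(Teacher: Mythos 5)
Your overall route is essentially the paper's: compute the entries of $\tilde{\vec{P}}^R + \tfrac{2-c}{s}\tilde{\vec{P}}^I$ by a case analysis on the local edge configuration between a pair of nodes, observe that with the hypothesized weights the out-edges land on the row \emph{minimum}, and repair this by negating one weight matrix in each bilinear pair. The paper does exactly this, just in the opposite order: it first notes that the hypothesis also yields weights realizing $-\Re(\vec{L}_q)$ and $-\Im(\vec{L}_q)$, and then finds the values $2$ (out-edge), $2(c-1)<0$ (in-edge only), and $0$ (non-edge), matching your post-flip values for cases (i)--(iii).

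The one genuine flaw is your case (iv), the bidirectional edge. You argue that after the sign flip the out-edge entries $\{1,2\}$ (bidirectional and one-way, respectively) all attain the row maximum because they dominate the non-out-edge entries $\{0,\,2c-2\}$. Dominance is not enough: the definition requires $\tilde{\vec{P}}_{ij} = \max_k \tilde{\vec{P}}_{ik}$ for \emph{every} $j$ with $\vec{A}_{ij}=1$, so in a row containing both a one-way out-edge (value $2$) and a bidirectional edge (value $1$), the latter fails to attain the maximum and the equivalence breaks in the forward direction. The paper sidesteps this entirely by invoking acyclicity: in a DAG no $2$-cycle exists, so case (iv) is vacuous; that restriction, not a dominance argument, is the correct resolution. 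Separately, your hedging about the diagonal is unnecessary and partly out of scope: after the sign flip $\tilde{\vec{P}}_{ii} = -(\vec{D}_s)_{ii} < 0$, strictly below both the out-edge value $2$ and the non-edge value $0$, so no bias, shift, or appeal to the downstream DeepSet is needed (and none is available within the lemma's statement anyway).
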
 
\begin{proof}
    Note that if there exist $\vec{W}^{K_R}, \vec{W}^{Q_R}, \vec{W}^{K_I}, \vec{W}^{Q_I} \in \Rb^{d\times d}$ such that 
    \begin{align*}
        \tilde{\vec{P}}^R &= \Re(\vec{L}_q),\\
        \tilde{\vec{P}}^I &= \Im(\vec{L}_q),
    \end{align*}
    there also exists $\vec{W}^{K_R}_{*}, \vec{W}^{Q_R}_{*}, \vec{W}^{K_I}_{*}, \vec{W}^{Q_I}_{*} \in \Rb^{d\times d}$ such that
    \begin{align*}
        \tilde{\vec{P}}^R &= -\Re(\vec{L}_q), \\
        \tilde{\vec{P}}^I &= -\Im(\vec{L}_q).
    \end{align*}
    The negative magnetic Laplacian is $-\vec{L}_q = \vec{A}_s \odot e^{i\Theta^{(q)}} -\vec{D}_s$. Note that since $\vec{D}_s$ is a diagonal matrix, it does not affect the off-diagonal coefficients of $-\vec{L}_q$. Denote $c \coloneqq \cos{(2\pi q)}$ and $s \coloneqq \sin{(2 \pi q)}$, and consider the different cases.
    \begin{enumerate}
        \item $(i, j) \in E(G)$: $- \vec{L}_{q, ij} = \vec{A}_{s, ij}e^{i 2\pi q} = e^{i 2\pi q}$. Therefore, $-\Re(\vec{L}_q)_{ij} = c$ and $-\Im(\vec{L}_q)_{ij} = s$. In that case, $\tilde{\vec{P}}_{ij} = c + \frac{2 - c}{s}s = 2$.
        \item $(j, i) \in E(G)$: $- \vec{L}_{q, ij} = \vec{A}_{s, ij}e^{- i 2\pi q} = e^{- i 2\pi q}$. Therefore, $-\Re(\vec{L}_q)_{ij} = c$ and $-\Im(\vec{L}_q)_{ij} = - s$. In that case, $\tilde{\vec{P}}_{ij} = c - \frac{2 - c}{s}s = 2(c - 1) < 0$.
        \item $(i, j), (j, i) \notin E(G)$: $- \vec{L}_{q, ij} = 0 = -\Re(\vec{L}_q)_{ij} = -\Im(\vec{L}_q)_{ij}$. In that case, $\tilde{\vec{P}}_{ij} = 0$.
        \item Since we consider DAGs, the case where $(i, j) \in E(G)$ and $(j, i) \in E(G)$, i.e., $G$ contains a $2$-node cycle, never occurs.
    \end{enumerate}
    The maximum over a row of $\tilde{\vec{P}}$ is therefore $2$, and we have that 
    \begin{equation*}
        \tilde{\vec{P}}_{ij} = \max_k \tilde{\vec{P}}_{ik} \iff \vec{A}_{ij} = 1,
    \end{equation*}
    which concludes the proof.
\end{proof}
The following uses \cref{th:lemma_decomp} as a sufficient condition for identifying out-adjacency.
\begin{lemma}\label{th:lemma_mlap}
    Let G be a directed graph with magnetic Laplacian $\vec{L}_q$, with eigendecomposition $\vec{L}_q = \vec{U}\spectrum\vec{U}^{*}$ and let $\vec{X} + i\vec{Y} = \vec{U}\spectrum^{1/2}$. Then for any permutation matrices $\vec{M}_X, \vec{M}_Y \in \Rb^{n\times n}$, the matrix $\vec{P} = [\vec{X}\vec{M}_X, \vec{Y}\vec{M}_Y] \in \Rb^{n\times 2n}$ is out-adjacency-identifying.
\end{lemma}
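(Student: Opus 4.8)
The plan is to reduce to \cref{th:lemma_decomp}, so it suffices to produce real matrices $\vec{W}^{K_R},\vec{W}^{Q_R},\vec{W}^{K_I},\vec{W}^{Q_I}\in\Rb^{2n\times 2n}$ (here the embedding dimension is $d=2n$) such that the two bilinear forms built from $\vec{P}$ reproduce $\Re(\vec{L}_q)$ and $\Im(\vec{L}_q)$ exactly; the $1/\sqrt{d}$ normalization is harmless, since it can be absorbed into one factor of each form, and I would restore it by a final rescaling.

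First I would record the spectral identity underlying the whole argument. Since the magnetic Laplacian $\vec{L}_q$ is Hermitian and positive semidefinite, its eigenvalues $\spectrum$ are real and nonnegative, so $\spectrum^{1/2}$ is a genuine real diagonal matrix and $\vec{S}\coloneqq\vec{U}\spectrum^{1/2}=\vec{X}+i\vec{Y}$ obeys $\vec{S}\vec{S}^{*}=\vec{U}\spectrum^{1/2}(\spectrum^{1/2})^{\top}\vec{U}^{*}=\vec{U}\spectrum\vec{U}^{*}=\vec{L}_q$. Expanding $\vec{S}\vec{S}^{*}=(\vec{X}+i\vec{Y})(\vec{X}^{\top}-i\vec{Y}^{\top})$ and splitting into real and imaginary parts yields
\begin{equation*}
  \Re(\vec{L}_q)=\vec{X}\vec{X}^{\top}+\vec{Y}\vec{Y}^{\top},\qquad \Im(\vec{L}_q)=\vec{Y}\vec{X}^{\top}-\vec{X}\vec{Y}^{\top}.
\end{equation*}

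Next I would exploit that $\vec{M}_X,\vec{M}_Y$ are permutation matrices, hence orthogonal and invertible, to undo the permutations hidden in $\vec{P}=[\vec{X}\vec{M}_X,\vec{Y}\vec{M}_Y]$. Working in $n\times n$ blocks, one verifies directly that
\begin{equation*}
  \vec{P}\begin{pmatrix}\vec{M}_X^{-1} & \vzero\\ \vzero & \vec{M}_Y^{-1}\end{pmatrix}=[\vec{X},\vec{Y}],\qquad \vec{P}\begin{pmatrix}\vzero & -\vec{M}_X^{-1}\\ \vec{M}_Y^{-1} & \vzero\end{pmatrix}=[\vec{Y},-\vec{X}].
\end{equation*}
Choosing $\vec{W}^{K_R}=\vec{W}^{Q_R}=\vec{W}^{Q_I}$ to be the first block matrix and $\vec{W}^{K_I}$ the second, a single block multiplication gives $(\vec{P}\vec{W}^{K_R})(\vec{P}\vec{W}^{Q_R})^{\top}=[\vec{X},\vec{Y}][\vec{X},\vec{Y}]^{\top}=\vec{X}\vec{X}^{\top}+\vec{Y}\vec{Y}^{\top}=\Re(\vec{L}_q)$ and $(\vec{P}\vec{W}^{K_I})(\vec{P}\vec{W}^{Q_I})^{\top}=[\vec{Y},-\vec{X}][\vec{X},\vec{Y}]^{\top}=\vec{Y}\vec{X}^{\top}-\vec{X}\vec{Y}^{\top}=\Im(\vec{L}_q)$. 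Rescaling $\vec{W}^{K_R}$ and $\vec{W}^{K_I}$ by $\sqrt{2n}$ restores the $1/\sqrt{d}$ factor, and \cref{th:lemma_decomp} then gives that $\vec{P}$ is out-adjacency-identifying.

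The hard part is not any single computation but getting two structural choices right. The first is the ``key'' matrix for the imaginary part: it must yield the mixed block $[\vec{Y},-\vec{X}]$ rather than a symmetric one, precisely because $\Im(\vec{L}_q)$ is the skew combination $\vec{Y}\vec{X}^{\top}-\vec{X}\vec{Y}^{\top}$; I would double-check the sign by tracking the $\pm i$ carefully through $\vec{S}\vec{S}^{*}$. The second is a point I would state explicitly rather than gloss over: the positive semidefiniteness of $\vec{L}_q$, which is what makes $\spectrum^{1/2}$ real and hence forces $\vec{S}\vec{S}^{*}=\vec{L}_q$ instead of $\vec{U}\abs{\spectrum}\vec{U}^{*}$. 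Everything else is routine bookkeeping with block matrices and the identities $\vec{M}_X^{-1}=\vec{M}_X^{\top}$, $\vec{M}_Y^{-1}=\vec{M}_Y^{\top}$.
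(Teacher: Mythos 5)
Your proposal is correct and follows essentially the same route as the paper: reduce to \cref{th:lemma_decomp} by exhibiting explicit $2n\times 2n$ block weight matrices built from $\vec{M}_X^{\top},\vec{M}_Y^{\top}$ (with a $\sqrt{d}$ rescaling to absorb the $1/\sqrt{d}$ normalization) so that the two bilinear forms reproduce $\Re(\vec{L}_q)=\vec{X}\vec{X}^{\top}+\vec{Y}\vec{Y}^{\top}$ and $\Im(\vec{L}_q)=\vec{Y}\vec{X}^{\top}-\vec{X}\vec{Y}^{\top}$; your block choices differ only cosmetically from the paper's (you place the sign and swap on the key side and undo the permutations explicitly, whereas the paper cancels them via $\vec{M}\vec{M}^{\top}=\vec{I}$ and puts the sign on the query side). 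Your explicit justification of the spectral identity via positive semidefiniteness of $\vec{L}_q$ is a detail the paper leaves implicit, but it does not change the argument.
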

\begin{proof}
    The idea is to find a pair of suitable bilinear projections to invoke \cref{th:lemma_decomp}. Note that since $\vec{M}_X$ and $\vec{M}_Y$ are permutation matrices, $\vec{M}_X \vec{M}_X^{\top} = \vec{M}_Y \vec{M}_Y^{\top} = \vec{I}_n$. For $\left(\vec{W}^{K_R}, \vec{W}^{Q_R}\right)$, we choose 
    \begin{align*}
        \vec{W}^{K_R} &= \sqrt{d}\vec{I}_{2n},\\
        \vec{W}^{Q_R} &= \vec{I}_{2n}.
    \end{align*}
    Therefore, $\frac{1}{\sqrt{d}}(\vec{P}\vec{W}^{K_R})(\vec{P}\vec{W}^{Q_R})^{\top} = \vec{X}\vec{M}_X\vec{M}_X^{\top}\vec{X}^{\top} + \vec{Y}\vec{M}_Y\vec{M}_Y^{\top}\vec{Y}^{\top} = \vec{X}\vec{X}^{\top} + \vec{Y}\vec{Y}^{\top} = \Re(\vec{L}_q)$. For $\left(\vec{W}^{K_I}, \vec{W}^{Q_I}\right)$, we choose
    \begin{align*}
        \vec{W}^{K_I} &= \sqrt{d}\begin{bmatrix}
                        \boldsymbol{0} & \vec{M}_X^{\top} \\
                        \vec{M}_Y^{\top} & \boldsymbol{0},
                        \end{bmatrix},\\
        \vec{W}^{Q_I} &= \begin{bmatrix}
                        \vec{M}_X^{\top} & \boldsymbol{0} \\
                        \boldsymbol{0} & - \vec{M}_Y^{\top}
                        \end{bmatrix}.
    \end{align*}
    In that case, $\frac{1}{\sqrt{d}}(\vec{P}\vec{W}^{K_I})(\vec{P}\vec{W}^{Q_I})^{\top} = \left[\vec{Y}\vec{M}_Y\vec{M}_Y^{\top}, \vec{X}\vec{M}_X\vec{M}_X^{\top} \right]\left[\vec{X}\vec{M}_X\vec{M}_X^{\top}, -\vec{Y}\vec{M}_Y\vec{M}_Y^{\top} \right]^{\top} = \left[\vec{Y}, \vec{X}\right]\left[\vec{X}, -\vec{Y}\right]^{\top} = \vec{Y}\vec{X}^{\top} - \vec{X}\vec{Y}^{\top} = \Im(\vec{L}_q)$.

    Applying Lemma \ref{th:lemma_decomp} concludes the proof.
\end{proof}
We are now ready to prove the main theorem.
\begin{theorem}{\Cref{th:mLPE}, restated)}
The mLPE is sufficiently out-adjacency-identifying.
\end{theorem}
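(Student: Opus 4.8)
The plan is to combine \cref{th:lemma_mlap} with a universal-approximation argument, mirroring the proof that the real-valued LPE is sufficiently adjacency-identifying but tracking the real and imaginary blocks separately. Recall that the magnetic Laplacian $\vec{L}_q$ is Hermitian and positive semi-definite, so its eigenvalues $\spectrum = (\lambda_1,\dots,\lambda_n)$ are real and non-negative and $\spectrum^{1/2}$ is well defined. Writing $\vec{X} + i\vec{Y} = \vec{U}\spectrum^{1/2}$, \cref{th:lemma_mlap} tells us that for \emph{any} permutation matrices $\vec{M}_X,\vec{M}_Y \in \Rb^{n\times n}$ the matrix $[\vec{X}\vec{M}_X,\ \vec{Y}\vec{M}_Y]$ is out-adjacency-identifying. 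Hence it suffices to exhibit, for every $\delta > 0$, a choice of the learnable components of the mLPE (the offset $\epsilon$, the row-wise MLP $\phi$, and the equivariant network $\rho$) whose output lies within Frobenius distance $\delta$ of such a matrix, where the internal width is taken at least $2n$.

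First I would dispose of $\rho$ by letting it approximate the identity map (up to padding/truncation to the needed width), which an equivariant feed-forward network can do, so the whole burden falls on $\vec{H}_\phi = \sum_{k=1}^n \phi\big(\vec{U}^R_{\cdot k},\, \vec{U}^I_{\cdot k},\, \lambda_k + \epsilon_k\big)$, i.e.\ on the unordered sum over eigenvectors. The idea is to make $\phi$ act as a ``router'': on input $(u^R,u^I,\mu)$ it should output $\sqrt{\mu}\,\big(u^R\,\vec{e}_{2\iota(\mu)-1} + u^I\,\vec{e}_{2\iota(\mu)}\big)$, where $\iota$ assigns each admissible value of $\mu$ a distinct slot in $[n]$. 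Since the eigenvectors are fixed, only finitely many triples $(\vec{U}^R_{jk},\vec{U}^I_{jk},\lambda_k+\epsilon_k)$ ever occur for a given graph, and (after perturbation) the values $\mu_k := \lambda_k + \epsilon_k$ are pairwise distinct, so this target agrees on those inputs with a continuous function on a compact set and is therefore approximable to within any $\delta$ by an MLP. Summing the routed contributions over $k$ then yields, up to the approximation error, the $j$-th row of $[\vec{X}_\mu \vec{M}_X,\ \vec{Y}_\mu\vec{M}_Y]$ (zero-padded), where $\vec{X}_\mu + i\vec{Y}_\mu = \vec{U}\,\mathrm{diag}(\sqrt{\mu_1},\dots,\sqrt{\mu_n})$ and $\vec{M}_X,\vec{M}_Y$ realize the reordering induced by $\iota$.

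Finally I would send the offset to zero: since $\epsilon$ is learnable and initialized at $\vec{0}$, taking it small (but with distinct coordinates on each degenerate eigenspace) makes $\mu_k \to \lambda_k$, hence $[\vec{X}_\mu\vec{M}_X,\ \vec{Y}_\mu\vec{M}_Y] \to [\vec{X}\vec{M}_X,\ \vec{Y}\vec{M}_Y]$. Combining this with the MLP approximation error, the mLPE output is within $\delta$, for arbitrary $\delta > 0$, of the out-adjacency-identifying matrix supplied by \cref{th:lemma_mlap}, which is exactly the definition of being sufficiently out-adjacency-identifying.

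The main obstacle is the permutation-invariant aggregation over eigenvectors: because the mLPE collapses the eigenvectors through an unordered sum, $\phi$ must encode each eigenvector's contribution into a coordinate block keyed by its eigenvalue so that the sum reassembles the columns of $\vec{U}\spectrum^{1/2}$ in a consistent order. Repeated eigenvalues make two such blocks collide; the learnable zero-initialized offset $\epsilon$ is precisely what breaks this degeneracy, at the price of recovering $\vec{U}\,\mathrm{diag}(\sqrt{\lambda_k+\epsilon_k})$ rather than $\vec{U}\spectrum^{1/2}$ exactly --- which is why the statement is phrased with arbitrary-precision approximation (``sufficiently'') rather than exact identifiability. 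A secondary subtlety is that $\phi$ and $\rho$ are fixed networks while the eigenvalue set varies with the input graph; as in the LPE case this is handled by invoking universal approximation on a fixed compact domain (equivalently, per fixed graph or finite family).
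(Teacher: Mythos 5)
Your proposal is correct and follows essentially the same route as the paper's proof: both reduce the claim to \cref{th:lemma_mlap}, use the learnable zero-initialized offset $\epsilon$ to make (possibly repeated) eigenvalues distinct, and then invoke universal approximation of the permutation-invariant aggregation so that the mLPE output approximates $[\vec{X}\vec{M}_X,\ \vec{Y}\vec{M}_Y]$ with $\vec{X}+i\vec{Y}=\vec{U}\spectrum^{1/2}$ to arbitrary precision. Your ``routing by perturbed eigenvalue'' construction is just an explicit realization of the sort-based permutation-invariant target functions $f^R, f^I$ that the paper approximates with a DeepSet, so no substantive difference or gap remains.
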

\begin{proof}   
    Let us start by recalling some notations. Let $G$ be a $n$-order directed acyclic graph, with magnetic Laplacian $\vec{L}_q = \vec{U}\spectrum\vec{U}^{*}$. Further write $\vec{U}\spectrum^{1/2} = \vec{X} + i\vec{Y}$, and denote $\vec{U}^R = \Re(\vec{U}) \in \Rb^{n}$ the real part of $\vec{U}$ and $\vec{U}^I = \Im(\vec{U}) \in \Rb^{n}$ the imaginary part of $\vec{U}$. In what follows, $j$ refers to a column of $\vec{U}^R$ or $\vec{U}^I$, and $i$ refers to the index of a node. We denote $u^R_{ij}$ the $i$-th component of $j$-th eigenvector of $\vec{U}^R$, and similarly for $u^I_{ij}$.
    
    We divide up the $d$-dimensional space of the mLPE into two $\frac{d}{2}$- dimensional sub-spaces, $\text{adj}_{Re}$ and $\text{adj}_{Im}$, i.e. we write, for a node $v_i$, 
    \begin{equation*}
        \mathrm{mLPE}(v) = \left[\text{adj}_{Re}(v_i), \text{adj}_{Im}(v_i)\right] \in \Rb^{d},
    \end{equation*}
    where
    \begin{align*}
        \text{adj}_{Re}(v_i) &= \rho_{Re}\left(\sum_{j=1}^n\phi_{Re}(u^R_{ij}, u^I_{ij}, \lambda_j + \epsilon_j)\right),\\
        \text{adj}_{Im}(v_i) &= \rho_{Im}\left(\sum_{j=1}^n\phi_{Im}(u^R_{ij}, u^I_{ij}, \lambda_j + \epsilon_j)\right),
    \end{align*}
    and where we have chosen $\rho$ to be a sum over the first dimension of its input, followed by a feed-forward neural network $\rho_{Re}$ and $\rho_{Im}$.
    For convenience, we also fix an arbitrary ordering over the nodes in $V(G)$ and define 
    \begin{align*}
        \vec{Q}^R & \coloneqq \begin{bmatrix}
            \text{adj}_{Re}(v_1) \\
            \vdots \\
            \text{adj}_{Re}(v_n)
        \end{bmatrix}\\
         \vec{Q}^I & \coloneqq \begin{bmatrix}
            \text{adj}_{Im}(v_1) \\
            \vdots \\
            \text{adj}_{Im}(v_n),
        \end{bmatrix}
    \end{align*}
    where $v_i$ is the $i$-th node in our ordering. Note that both $\text{adj}_{Re}(v_i)$ and $\text{adj}_{Im}(v_i)$ are DeepSets over the multiset 
    \begin{equation*}
        M_i \coloneqq \{\!\!\{(u^R_{ij}, u^I_{ij}, \lambda_j + \epsilon_j)\}\!\!\}.
    \end{equation*}
    To prove that $\left[\vec{Q}^R, \vec{Q}^I\right]$ is sufficiently out-adjacency-identifying, we show that $\left[\vec{Q}^R, \vec{Q}^I\right]$ approximates $\left[\vec{X}\vec{M_X}, \vec{Y}\vec{M_Y}\right]$ arbitrarily close, for $\vec{M}_X, \vec{M}_Y$ some permutation matrices. To that extent, we demonstrate how $\text{adj}_{Re}(v_i)$ (resp. $\text{adj}_{Im}(v_i)$) approximates $\vec{X}_i$ (respectively $\vec{Y}_i$) arbitrarily close for all $i$. Since $\text{adj}_{\text{Re}}$ and $\text{adj}_{\text{Im}}$ are DeepSets, they can approximate any continuous permutation-invariant function over a compact set over the real numbers. Therefore, it remains to show that there exists a permutation invariant function $f^R$ (respectively $f^I$) such that $f^R(M_i) = \vec{X}_i\vec{M}_X$ (respectively $f^I(M_i) = \vec{Y}_i\vec{M}_Y$) for all $i$ and some permutation matrix $\vec{M}_X$ (respectively $\vec{M}_Y$). To this end, note that the magnetic Laplacian graph $G$ of order $n$ has at most $n$ distinct eigenvalues. Hence, we can choose $\epsilon_j$  such that $\lambda
    _j + \epsilon_j$ is unique, for all $j$. In particular, let
    \begin{equation*}
        \epsilon_j = j\cdot\delta,
    \end{equation*}
    where we choose $\delta < \min_{l, o} |\lambda_l - \lambda_o| > 0$, i.e., the smallest non-zero difference between two eigenvalues. Now let
\begin{align*}
f^R(M_i) \coloneqq {} &
\left[
u^R_{is_1}\cdot \sqrt{\lambda_1 + s_1\cdot\delta},
\quad\dots\quad
\right. \\
& \left.
u^R_{is_n}\cdot \sqrt{\lambda_n + t_n\cdot\delta}
\right], \\[0.5ex]
f^I(M_i) \coloneqq {} &
\left[
u^I_{it_1}\cdot \sqrt{\lambda_1 + t_1\cdot\delta},
\quad\dots\quad
\right. \\
& \left.
u^I_{it_n}\cdot \sqrt{\lambda_n + t_n\cdot\delta}
\right].
\end{align*}
    where the $\{s_k\}_{k=1}^n$ (respectively $\{t_k\}_{k=1}^n$) are indices in $[n]$ such that the $\lambda_1 + s_k\cdot\delta$ (respectively $\lambda_1 + t_k\cdot\delta$) are sorted in ascending order. This ordering is reflected in some permutation matrix $\vec{M}_X$ (respectively $\vec{M}_Y)$.

    Then, note that, since $\Lambda^{1/2}$ a is real-valued, diagonal matrix, $\vec{X} = \Re(\vec{U}\Lambda^{1/2}) = \Re(\vec{U})\Lambda^{1/2} = \vec{U}^R \Lambda^{1/2}$, similarly $\vec{Y} = \Im(\vec{U}\Lambda^{1/2}) = \Im(\vec{U})\Lambda^{1/2} = \vec{U}^I\Lambda^{1/2}$. The $i$-th row of $\vec{U}^R \Lambda^{1/2}$ is equal to 
    \begin{equation*}
        \vec{X}_i = (\vec{U}^R \Lambda^{1/2})_i = [u^R_{i1}\cdot\sqrt{\lambda_1},  \quad\dots\quad, u^R_{in}\cdot\sqrt{\lambda_n}],
    \end{equation*}
    and, similarly, 
    \begin{equation*}
        \vec{Y}_i = (\vec{U}^I \Lambda^{1/2})_i = [u^I_{i1}\cdot\sqrt{\lambda_1},  \quad\dots\quad, u^I_{in}\cdot\sqrt{\lambda_n}].
    \end{equation*}

    Since we can choose $\delta$ arbitrarily small, we can choose it such that:
    \begin{align*}
        \norm{f^R(M_i) - X_i} & < \gamma_R, \\
        \norm{f^I(M_i) - Y_i} & < \gamma_I,
    \end{align*}
    for any $\gamma_R, \gamma_I > 0$. Since $f^R$ and $f^I$ are continuous permutation-invariant functions over a compact set, a Deepset can approximate them  arbitrarily close, and as a result, we have
    \begin{align*}
        \norm{\text{adj}_{Re}(v_i) - \vec{X}_i\vec{M}_X} &< \gamma_R, \\
        \norm{\text{adj}_{Im}(v_i) - \vec{Y}_i\vec{M}_Y} &< \gamma_I,
    \end{align*}
    for all $i$ and arbitrarily small $\gamma^R, \gamma^I$. In matrix form, it gives 
    \begin{align*}
        \norm{\vec{Q}^R - \vec{X}\vec{M}_X} &< \gamma_R, \\
        \norm{\vec{Q}^I - \vec{Y}\vec{M}_Y} &< \gamma_I,
    \end{align*}
    and, further
    \begin{equation*}
        \norm{ \left[\vec{Q}^R, \vec{Q}^I\right] - \left[\vec{X}\vec{M}_X, \vec{Y}\vec{M}_Y\right] } < \gamma,
    \end{equation*}
    where $\gamma = \max(\gamma_R, \gamma_I)$. We know by~\cref{th:lemma_mlap} that $\left[\vec{X}\vec{M}_X, \vec{Y}\vec{M}_Y\right]$ is out-adjacency-identifying. Since the mLPE can approximate it arbitrarily close, we can, in turn, conclude that mLPE is sufficiently out-adjacency-identifying.
\end{proof}

\section{Implementation details}

Here, we outline implementation details. 

\subsection{Autoencoder}\label{app:imp_ae}
\paragraph{Number of eigenvectors} 
Our definition of the LPE and mLPE assumes that we have access to all $n$ eigenvectors. However, this might be impractical for large graphs. In our implementation, we only take the $k$ smallest eigenvalues and their corresponding eigenvectors. In practice, $\phi$ is therefore instantiated as
\begin{align}
\phi(\vec{U}_{:,:k}, \spectrum_{:k})
&\coloneqq
\Big[
    \phi\big(\vec{U}_{1,:k} \,\|\, \spectrum_{:k} + \epsilon\big), \\
&\qquad
    \ldots,\;
    \phi\big(\vec{U}_{n,:k} \,\|\, \spectrum_{:k} + \epsilon\big)
\Big]
\in \mathbb{R}^{n \times k \times d},
\label{eq:phi_k}
\end{align}
where $\vec{U}_{:,:k}$ denotes the $k$ first columns of $\vec{U}$, $\spectrum_{:k}$ the $k$ first eigenvalues, $\vec{U}_{i,:k}$ the $k$ first columns of $i$-th row of $\vec{U}$.

\paragraph{Sign invariance} The eigenvectors of the graph Laplacian are not unique, e.g., if $\vec{v}$ is an eigenvector, then so is $-\vec{v}$~\citep{lim2023sign}. Hence, GNNs should ideally be \emph{sign invariant}, i.e., represent $\vec{v}$ and $-\vec{v}$ identically. To account for this symmetry, we adopt a SignNet-like mechanism~\citep{lim2023sign}, which randomly flips the signs of eigenvectors during training.
\paragraph{Decoder DeepSet}
Recall the definition of our edge decoder,
\begin{align*}
    \tilde{\vec{Z}} & \coloneqq \tfrac{1}{\sqrt{d}} \vec{Z}\vec{W}^Q(\vec{Z}\vec{W}^K)^\top\\
    \hat{\vec{A}} & \coloneqq \sigma(\mathrm{DeepSet}(\tilde{\vec{Z}})) , \quad \hat{\vec{A}} \in \mathbb{R}^{n \times n}.
\end{align*}
% Here, we detail the architecture of the $\mathrm{DeepSet}$. 
% \begin{align}
%     \vec{C} &= \frac{1}{n}\sum_{i=1}^n (\tilde{\vec{Z}}\vec{W}^{C} + \vec{b}^{C})_{:,i} \in \Rb^{n\times d_{DS}}\\
%     \mathrm{DeepSet}(\tilde{\vec{Z}}) &= \vec{W}^{\mathrm{out}}\left(\mathrm{GeLU}\left(\tilde{\vec{Z}}\vec{W}^{\mathrm{in}} + \vec{C}\vec{W}^{\mathrm{ctx}} + \vec{b}^{\mathrm{in}}\right)\right) + \vec{b}^{\mathrm{out}}
% \end{align}
% where 
% \begin{itemize}
%     \item $\vec{W}^{C} \in \Rb^{1\times d_{DS}}$ is a learnable projections matrix and $\vec{b}^{C} \in \Rb^{d_{DS}}$ the associated bias. $\sum_{i=1}^n (\tilde{\vec{Z}}\vec{W}^{C} + \vec{b}^{C})_{:,i}$ is to be understood as the summation over the rows of projected input.
%     \item $\vec{W}^{\mathrm{in}} \in \Rb^{1\times d_{DS}}, \vec{W}^{\mathrm{ctx}} \in \Rb^{d_{DS}\times d_{DS}}$ are learnable projections matrices, and $\vec{b}^{\mathrm{in}} \in \Rb^{d_{DS}}$ the associated bias term. Note that we consider $\tilde{\vec{Z}}$ as a matrix in $\Rb^{d_{DS}\times d_{DS} \times 1}$, therefore projecting it into $\Rb^{n\times n \times d_{DS}}$, and broadcasting $\vec{C}\vec{W}^{\mathrm{ctx}}$ over the columns.
%     \item $\vec{W}^{\mathrm{out}} \in \Rb^{d_{DS}\times 1}$ is a learnable projection matrix, and $\vec{b}^{\mathrm{out}} \in \Rb$ is a the associated bias term.
% \end{itemize}

Here, we describe the architecture of the $\mathrm{DeepSet}$. Let
\begin{align*}
\vec{C}
&\coloneqq \frac{1}{n}\sum_{i=1}^n
(\tilde{\vec{Z}}\vec{W}^{C} + \vec{b}^{C})_{:,i}
\in \Rb^{n\times d_{DS}} \\[0.3em]
\mathrm{DeepSet}(\tilde{\vec{Z}})
&\coloneqq
\vec{W}^{\mathrm{out}}\Big(
\mathrm{GeLU}\big(
\tilde{\vec{Z}}\vec{W}^{\mathrm{in}}
+ \vec{C}\vec{W}^{\mathrm{ctx}} \\
&\qquad\qquad
+ \vec{b}^{\mathrm{in}}
\big)
\Big)
+ \vec{b}^{\mathrm{out}} ,
\end{align*}

where
\begin{itemize}
\item $\vec{W}^{C} \in \Rb^{1\times d_{DS}}$ and $\vec{b}^{C} \in \Rb^{d_{DS}}$ are learnable parameters; the summation is taken over the projected input rows.
\item $\vec{W}^{\mathrm{in}} \in \Rb^{1\times d_{DS}}, \vec{W}^{\mathrm{ctx}} \in \Rb^{d_{DS}\times d_{DS}}$, and $\vec{b}^{\mathrm{in}} \in \Rb^{d_{DS}}$ are learnable parameters. We view $\tilde{\vec{Z}}$ as a tensor in $\Rb^{d_{DS}\times d_{DS}\times 1}$, yielding a projection in $\Rb^{n\times n\times d_{DS}}$, with $\vec{C}\vec{W}^{\mathrm{ctx}}$ broadcast across columns.
\item $\vec{W}^{\mathrm{out}} \in \Rb^{d_{DS}\times 1}$ and $\vec{b}^{\mathrm{out}} \in \Rb$ are the final learnable parameters.
\end{itemize}

\paragraph{On the problem of structurally equivalent nodes}
Our encoder assigns latent representations to nodes. To avoid handling permutations in the decoder and the loss function, we preserve the nodes' ordering when feeding graphs to the autoencoder. Consequently, a reconstructed edge is considered correct by the loss function only if it exactly matches an edge in the original graph. However, in some situations, this assumption becomes problematic.

\begin{figure}
\centering
\includegraphics{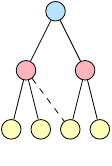}
\caption{Example of failure case for the decoder. Solid lines denote original edges, while the dashed line denotes a structurally correct prediction that will be considered as incorrect by the loss function.}
\label{fig:orbit}
\end{figure}
Consider the case illustrated in \Cref{fig:orbit}. Structurally equivalent nodes are colored identically; they belong to the same \new{orbit}. Structurally equivalent nodes also correspond to identical rows in the eigenvector matrix $\vec{U}$, which in turn means that $\phi$ will assign them identical representations. Since GNNs aggregate information based on neighborhoods, such nodes inevitably receive identical embeddings from $\rho$ as well.  For example, the encoder assigns the same representation to all four yellow nodes. The same applies to the two red nodes, which also form an orbit. Consequently, when decoding edges between yellow and red nodes, the decoder should treat a yellow node as equally likely to connect to either red node. Although both predictions are structurally correct, the preserved node ordering forces the loss function to mark one of them as incorrect. Similar observations were made in~\citep{laabid2025equivariant,morris2024orbitequivariant}.

We experimentally found that this would lead to poor reconstruction performance, particularly in trees, where cases similar to \Cref{fig:orbit} are common. Therefore, we designed a principled way to perturb the input of nodes belonging to non-trivial orbits (i.e., non-singleton orbits) to make them distinguishable in the latent space and ease the decoder's task.

During preprocessing, we identify non-trivial orbits using the final node colorings from the 1-WL algorithm. Although 1-WL does not always distinguish non-equivalent nodes, prior work~\citep{morris2024orbitequivariant} suggests that identical 1-WL colorings serve as a good practical solution to detect structural equivalence. Before feeding eigenvectors to $\phi$, we slightly perturb the rows of $\vec{U}$ corresponding to nodes in non-trivial orbits by applying a low-magnitude Gaussian modulation. Formally,
\begin{equation*}
\vec{U}_{\mathrm{perturb}} = (1 + \eta \vec{M} \odot \boldsymbol{\epsilon}) \odot \vec{U},
\end{equation*}
where $\eta$ is a small modulation coefficient (typically $\eta = 0.005$), $\vec{M} \in \mathbb{R}^{n\times n}$ is a mask with rows equal to $1$ if a node belongs to a non-trivial orbit and $0$ otherwise, and $\boldsymbol{\epsilon}$ is a matrix whose rows are independent Gaussian random vectors. 

In simple terms, this modulation injects Gaussian noise into the rows of $\vec{U}$ corresponding to nodes in non-trivial orbits. Since nodes in the same orbit have different inputs to $\phi$, the corresponding latents will be slightly different. This provides a simple, parsimonious way to make structurally equivalent nodes distinguishable in the latent space. In particular, we found it to yield much better reconstruction results than previously proposed symmetry-breaking methods, such as assigning node identifiers~\citep{bechler2025towards,you2019position}.

% We provide a stability analysis for this symmetry-breaking strategy in \Cref{sec:symbreak}.

\subsection{DiT}
\paragraph{Self-conditioning}
We implement self-conditioning~\citep{chenanalog} in all our experiments. Self-conditioning provides the denoiser at step $t$ with its own prediction from the previous step as an additional input. During training, this input is dropped with probability 0.5. When conditioning is active, the model first produces an unconditioned prediction, which is then reused as input for a second pass. Gradients are not backpropagated through the first prediction. We find that this approach consistently improves the quality of our results.

\paragraph{Logit-normal time step sampling}
Following \citet{esser2024scaling}, instead of sampling diffusion times $t$ uniformly from $[0,1]$, 
we draw them from a logit-normal distribution. Formally,
\begin{equation*}
\mathrm{Logit}(t) = \log\!\left(\tfrac{t}{1-t}\right) \sim \mathcal{N}(\mu,\sigma^2), 
\quad t \in (0,1).
\end{equation*}
In practice, this is implemented by sampling
\begin{equation*}
z \sim \mathcal{N}(\mu,\sigma^2), \qquad 
t = \frac{1}{1 + e^{-z}}.
\end{equation*}
The resulting density is
\begin{equation*}
p(t) \propto \frac{1}{t(1-t)} 
\exp\!\left( -\frac{(\mathrm{Logit}(t) - \mu)^2}{2\sigma^2} \right),
\end{equation*}
which biases training towards intermediate time steps, the most informative region for flow matching models.

\section{Experimental details}

Here, we outline details on the experiments.

\subsection{Dataset details}\label{sec:datasets}
\subsubsection{Synthetic datasets}
\paragraph{Description}
We experiment on three synthetic datasets: \textsc{Extended Tree}, \textsc{Extended Planar}, and \textsc{Ego}. The first two extend the benchmarks introduced in \cite{martinkus2022spectre}, namely \textsc{Tree} and \textsc{Planar}, which originally consisted of random trees and planar graphs with a fixed size of 64 nodes. These original datasets contained only 200 samples, split into 128/32/40 for train/validation/test, which is too small and leads to high variance during evaluation. Following \cite{krimmel2025noisy}, we extend these benchmarks to 8,192 training samples and 256 samples each for validation and test. The \textsc{Ego} dataset, proposed in \cite{you2018graphrnn}, comprises 757 3-hop ego networks extracted from the Citeseer network, each with up to 400 nodes.

\paragraph{Metrics}
On synthetic graphs, we evaluate our method using the \emph{Maximum Mean Discrepancy} (MMD) metric, as introduced in \cite{martinkus2022spectre}. For each setting, we compute graph statistics on a set of generated samples and the test set, embed them via a kernel, and compute the MMD between the resulting distributions. We report MMD scores for the degree distribution (\textbf{Deg.}), counts of non-isomorphic four-node orbits (\textbf{Orbit}), clustering coefficient (\textbf{Cluster.}), and Laplacian spectrum (\textbf{Spec.}). For the \textsc{Extended Tree} and \textsc{Extended Planar} datasets, we also report the proportion of valid, unique, and novel graphs (\textbf{V.U.N.}), where validity is defined by the dataset constraints (e.g., planarity or tree structure).

\subsubsection{Molecular datasets}
\paragraph{Description}
We next evaluate our method on the Moses benchmark~\citep{polykovskiy2020molecular}, derived from the ZINC Clean Leads collection~\citep{sterling2015zinc}, which comprises molecules with 8 to 27 heavy atoms, filtered according to specific criteria.  
We also consider the Guacamol benchmark~\citep{brown2019guacamol}, based on the ChEMBL 24 database~\citep{mendez2019chembl}. This dataset comprises synthesized molecules tested against biological targets, ranging in size from 2 to 88 heavy atoms. Consistent with prior work~\citep{vignac2022digress,siraudin2025cometh}, the \textsc{GuacaMol} dataset undergoes a filtering procedure: SMILES strings are converted into graphs and then mapped back to SMILES, discarding molecules for which this conversion fails. We utilize the implementation presented in \cite{vignac2022digress} for this process. The standard dataset splits are used.

\paragraph{Metrics}
For both datasets, we evaluate validity (\textbf{Valid}), uniqueness (\textbf{Unique}), and novelty (\textbf{Novel}). Following \cite{qinmadeira2024defog}, for \textsc{GuacaMol} we instead report the percentages of valid (\textbf{V.}), valid and unique (\textbf{V.U.}), and valid, unique, and novel (\textbf{V.U.N.}) samples. Each benchmark also includes specific metrics:  
\textsc{MOSES} compares generated molecules to a scaffold test set, reporting the \textbf{Fréchet ChemNet Distance (FCD)}, the \textbf{Similarity to the Nearest Neighbor (SNN)}—the average Tanimoto similarity between fingerprints of generated molecules and those of a reference set—and \textbf{Scaffold similarity (Scaf)}, which measures the frequency match of Bemis–Murcko scaffolds between generated and reference sets. Finally, \textbf{Filters} indicates the percentage of generated molecules passing the dataset’s construction filters.  
\textsc{GuacaMol} reports two metrics: the \textbf{Fréchet ChemNet Distance (FCD)} and the \textbf{KL divergence (KL)} between distributions of physicochemical descriptors in the training and generated sets.

\subsubsection{DAG dataset}
\paragraph{Description}
We evaluate our DAG generation variant using the TPU Tiles dataset~\citep{phothilimthana2023tpugraphs}. It consists of computational graphs extracted from Tensor Processing Unit workloads. We split the dataset into 5\,040 training, 630 validation, and 631 test graphs.

\paragraph{Metrics}
We compute MMD over statistics analogous to those used for undirected graphs, replacing the degree distribution with incoming (\textbf{In Deg.}) and outgoing (\textbf{Out Deg.}) degree distributions, and adding features from the graph wavelet transform (\textbf{Wave.}). Samples are considered valid if they satisfy the DAG property.

\subsection{Resources}
All our models, both autoencoders and DiTs, are trained on two Nvidia L40 GPUs with 40GB VRAM. All models were sampled using a single Nvidia L40 GPU, and we optimized the batch size to fully utilize its memory.

\subsection{Training details}
All our autoencoders are trained using the AdamW optimizer, a weight decay of $1e^{-4}$, a KL coefficient of $1e^{-6}$, and a cosine learning rate schedule. We train our DiTs using a constant learning rate of $2e^{-4}$, and maintain Exponential Moving Average (EMA) weights for evaluation.

\begin{table}[]
    \centering
    \caption{Training times on the different datasets, in GPU-hours}
    \resizebox{0.8\linewidth}{!}{
    \begin{tabular}{l c c c c c c}
    \toprule
        Dataset & \textbf{Planar} & \textbf{Tree} & \textbf{Ego} & \textbf{Moses} & \textbf{GuacaMol} & \textbf{TPU Tile} \\
    \midrule
        LG-VAE & 1.2 & 3.2 & 2.6 & 0.5 & 3.6 & 2.3 \\
        LG-Flow & 13.8 & 14.0 & 44 & 149.5 & 340 & 15.9 \\ \midrule
        Total & 15 & 17.2 & 46.6 & 150 & 343.6 & 18.2 \\
    \bottomrule
    \end{tabular}}
    \label{tab:time}
\end{table}

\begin{table}[]
    \centering
    \caption{Eigendecomposition runtime across datasets, in seconds}
    \resizebox{0.8\linewidth}{!}{
    \begin{tabular}{l c c c c c c}
    \toprule
        Dataset & \textbf{Planar} & \textbf{Tree} & \textbf{Ego} & \textbf{Moses} & \textbf{GuacaMol} & \textbf{TPU Tile} \\
    \midrule
        Time (s) & 5.6 & 5.6 & 1.7 & 604.5 & 542.16 & 3.7 \\
    \bottomrule
    \end{tabular}}
    \label{tab:time}
\end{table}

\subsection{Hyperparameters}
\paragraph{Autoencoder} We use a different number of eigenvectors depending on the dataset, as well as a different value for $d$. Our hyperparameters are listed in \Cref{tab:ae_hps}. 

\begin{table}[]
    \centering
    \caption{Hyperparameters of LG-VAE on the different datasets.}
    \resizebox{\linewidth}{!}{
    \begin{tabular}{l c c c c c c}
    \toprule
        Dataset & \textbf{Planar} & \textbf{Tree} & \textbf{Ego} & \textbf{Moses} & \textbf{GuacaMol} & \textbf{TPU Tile} \\
    \midrule
        Number of eigenvectors & 16 & 32 & 64 & 16 & 32 & 64 \\
        Latent dim. $d$ & 16 & 24 & 24 & 24 & 24 & 24 \\
        Modulation magnitude & 0 & 0.005 & 0.005 & 0.005 & 0.05 & 0.05 \\
        Number of layers $\phi$ & 2 & 2 & 2 & 2 & 2 & 2 \\
        Embedding dim $\phi$ & 256 & 384 & 256 & 256 & 384 & 256 \\
        Number of layers $\rho$ & 16 & 32 & 16 & 16 & 16 & 12 \\
        Embedding dim $\rho$ & 256 & 384 & 256 & 256 & 384 & 256 \\
        Decoder DeepSet embedding dim & 16 & 24 & 16 & 24 & 24 & 24 \\
    \bottomrule
    \end{tabular}}
    \label{tab:ae_hps}
\end{table}

\paragraph{DiT} We experiment using three variants of the DiT: two variants coming from the original DiT paper, DiT Small (DiT-S) and Base (DiT-B), and one smaller variant, coined DiT Tiny (DiT-T), since some of our datasets are smaller than those used in image generation. We detail their hyperparameters in \Cref{tab:dit_variants}. 

\begin{table}[h]
    \centering
    \caption{Hyperparameters of DiT variants used in our experiments.}
    \begin{tabular}{l c c c}
    \toprule
        Variant & \textbf{DiT-T} & \textbf{DiT-S} & \textbf{DiT-B} \\
    \midrule
        Number of layers & 6 & 12 & 12 \\
        Number of heads & 6 & 6 & 12 \\
        Embedding dimension & 384 & 384 & 768 \\
        Number of parameters & 16.2 M & 32.2 M & 128.4 M \\
    \bottomrule
    \end{tabular}
    \label{tab:dit_variants}
\end{table}

\subsection{Additional experimental results}\label{app:add_exp} 

\paragraph{Reconstruction performance} See~\cref{tab:ae_acc} for results on reconstruction performance. Note that reconstruction performance on \textsc{Ego} is noticeably lower than on the other datasets. This can be attributed to two factors: the small dataset size (only 606 training samples) and the inclusion of the validation set within the training data, which limits reliable assessment of generalization during training. Nevertheless, \textsc{Ego} does not impose strong structural constraints, and the \textbf{Edge. Acc.} remains high (0.9957). As a result, the latent diffusion model can still be trained and generate samples that statistically resemble the training distribution.

\begin{table}[h]
    \centering
    \caption{Reconstruction accuracy of our autoencoder across datasets. \textbf{Edge Sample Acc.} and \textbf{Node Sample Acc.} denote the proportion of graphs in which all edges and all nodes, respectively, are correctly reconstructed. For datasets without node labels, \textbf{Edge Sample Acc.} coincides with \textbf{Sample Acc.}.} 
    \resizebox{0.70\linewidth}{!}{
    \begin{tabular}{l c c c}
    \toprule
        Datasets    & \makecell{\textbf{Edge Sample}\\ \textbf{Acc.}}  & \makecell{\textbf{Node Sample}\\ \textbf{Acc.}}  & \textbf{Sample Acc.} \\
    \midrule
        Extended Planar     & 0.9961                & --                   & 0.9961          \\
        Extended Tree       & 0.9844                & --                   & 0.9844          \\
        Ego                 & 0.0861                & --                   & 0.0861          \\
        Moses               & 0.9967                & 1.0                   & 0.9967          \\
        GuacaMol            & 0.9973                & 1.0                   & 0.9973          \\
        TPU Tile            & 0.9493                & 1.0                   & 0.9493          \\
    \bottomrule
    \end{tabular}}
    \label{tab:ae_acc}
\end{table}

\paragraph{Experiments on larger planar graphs} See results in \Cref{tab:large_planar}. To assess how inference efficiency evolves beyond the standard 64-node benchmark, we trained LG-VAE and LG-Flow on planar graphs of increasing sizes (128, 256, 512). We denote by \textsc{Planar 128} the dataset of planar graphs with 128 nodes, and similarly for 256 and 512. Recall that \textsc{Extended Planar} originally contained graphs with 64 nodes.

LG-Flow maintains strong performance on 512-node planar graphs, achieving high V.U.N scores and low MMDs, while its inference time increases sub-quadratically. In contrast, DeFoG’s performance degrades sharply with graph size, especially on V.U.N. Its quadratic complexity limits the batch size during training (6 for \textsc{Planar 256}, 2 for \textsc{Planar 512}), leading to slow convergence and long training times. LG-Flow, by contrast, supports substantially larger batches (256 for \textsc{Planar 128} and \textsc{Planar 256}, 128 for \textsc{Planar 512}). This efficiency yields speedups of $340\times$, $262\times$, and $470\times$ on the corresponding datasets.

\begin{table}[h]
\centering
\caption{\textbf{Reconstruction and generation on larger planar graphs.} We highlight best results in \textbf{bold}.} 
\resizebox{\linewidth}{!}{
\begin{tabular}{l l c c c c c c c}
\toprule
    $|V|$ & Model  &  \textbf{Sample Acc.} & \textbf{Deg. ($\downarrow$)} & \textbf{Orbit ($\downarrow$)} & \textbf{Cluster. ($\downarrow$)} & \textbf{Spec. ($\downarrow$)} & \textbf{V.U.N. ($\uparrow$)} & \cellcolor{gray!20}\textbf{Time (s) ($\downarrow$)}\\
\midrule
    \multirow{2}{*}{64} & DeFog & -- & \textbf{0.3}$\spm{0.1}$ & \textbf{0.1}$\spm{0.1}$ & \textbf{9.7}$\spm{0.3}$ & 1.3$\spm{0.1}$ & \textbf{99.6}$\spm{0.4}$ & \cellcolor{gray!20}\num{8.9e2} \\
    & LG-Flow & 0.9961 & \textbf{0.3}$\spm{0.1}$ & 0.5$\spm{0.2}$ & 10.1$\spm{3.1}$ & \textbf{1.0}$\spm{0.2}$ & 99.0$\spm{0.2}$ & \cellcolor{gray!20}\textbf{4.6}$\spm{0.}$  \\
    \midrule
    
    \multirow{2}{*}{128} & DeFog & -- & \textbf{0.0}$\spm{0.0}$ & 0.7$\spm{0.2}$ & \textbf{3.1}$\spm{0.8}$ & 1.4$\spm{0.2}$ & 91.9$\spm{1.2}$ & \cellcolor{gray!20}\num{3.8e3} \\
    & LG-Flow  & 1.0000 & 0.1$\spm{0.0}$ & \textbf{0.3}$\spm{0.2}$ & 4.4$\spm{1.1}$ & \textbf{0.4}$\spm{0.1}$ & \textbf{99.1}$\spm{0.5}$ & \cellcolor{gray!20}\textbf{11.2}$\spm{0.}$ \\
    \midrule
    
    \multirow{2}{*}{256} & DeFog & -- & 0.8$\spm{0.1}$ & 28.6$\spm{1.8}$ & 49.6$\spm{3.1}$ & 2.9$\spm{0.1}$ & 16.8$\spm{2.7}$ & \cellcolor{gray!20}\num{1.5e4} \\
    & LG-Flow & 0.9727 & \textbf{0.2}$\spm{0.1}$ & \textbf{2.0}$\spm{0.8}$ & \textbf{17.1}$\spm{8.0}$ & \textbf{0.2}$\spm{0.0}$ & \textbf{97.9}$\spm{0.5}$ & \cellcolor{gray!20}\textbf{57.2}$\spm{0.0}$ \\
    \midrule
    
    \multirow{2}{*}{512} & DeFog & -- & 1.2$\spm{0.1}$ & 56.5$\spm{2.9}$ & 114.8$\spm{7.7}$ & 1.5$\spm{0.0}$ & 1.4$\spm{0.8}$ & \cellcolor{gray!20}\num{6.1e4} \\
    & LG-Flow & 0.8945 & \textbf{0.2}$\spm{0.1}$ & \textbf{1.1}$\spm{0.4}$ & \textbf{20.3}$\spm{6.3}$ & \textbf{0.1}$\spm{0.0}$ & \textbf{84.4}$\spm{3.2}$ & \cellcolor{gray!20}\textbf{\num{1.3e2}} \\
\bottomrule
\end{tabular}}
\label{tab:large_planar}
\vspace{-0.3cm}
\end{table}

\paragraph{Ablation on the number of eigenvectors $k$}We conducted an ablation on the number of eigenvectors used in LG-VAE to better assess the influence of this hyperparameter on reconstruction and generative performance. To this end, we ran experiments on Extended Tree and Extended Planar with $k \in [4, 8, 16, 32, 64]$. Our results are presented in \Cref{tab:abl_k}.

On \textsc{Extended Planar}, the number of eigenvectors $k$ has only a minor influence on reconstruction accuracy, although MMD metrics worsen for small $k$, consistent with the loss of high-frequency structural information. In contrast, on \textsc{Extended Tree}, $k$ has a much stronger impact on reconstruction quality; interestingly, the best results are obtained with only 25–50\% of the eigenvectors, showing that including high frequencies does not necessarily yield better performance.

\begin{table}[h!]
\centering
\caption{\textbf{Ablation study}: number of eigenvectors on Extended Planar and Extended Tree datasets. We highlight the best results in \textbf{bold}.}
\resizebox{0.9\linewidth}{!}{
\begin{tabular}{l c c c c c c}
\toprule

\multicolumn{7}{c}{\textbf{Extended Planar}} \\
\midrule
\makecell{Number of \\ eigenvectors} &
\textbf{Sample Acc.} &
\textbf{Deg. ($\downarrow$)} &
\textbf{Orbit ($\downarrow$)} &
\textbf{Cluster. ($\downarrow$)} &
\textbf{Spec. ($\downarrow$)} &
\textbf{V.U.N. ($\uparrow$)} \\
\midrule
4   & 0.9922 & 0.3$\spm{0.0}$ & 0.7$\spm{0.2}$ & 13.4$\spm{2.2}$ & 1.2$\spm{0.3}$ & \textbf{99.4}$\spm{0.7}$ \\
8   & 0.9844 & 0.3$\spm{0.2}$ & 1.0$\spm{0.3}$ & 12.9$\spm{3.2}$ & \textbf{1.0}$\spm{0.1}$ & 99.9$\spm{0.1}$ \\
16  & \textbf{0.9961} & 0.3$\spm{0.1}$ & 0.5$\spm{0.2}$ & 10.1$\spm{3.1}$ & \textbf{1.0}$\spm{0.2}$ & 99.0$\spm{0.2}$ \\
32  & 0.9844 & \textbf{0.2}$\spm{0.0}$ & \textbf{0.1}$\spm{0.1}$ & \textbf{7.6}$\spm{1.2}$ & 1.1$\spm{0.1}$ & 99.1$\spm{0.3}$ \\
64  & 0.9961 & 0.3$\spm{0.1}$ & 0.5$\spm{0.4}$ & 12.8$\spm{2.3}$ & 1.1$\spm{0.1}$ & 99.0$\spm{0.4}$ \\

\midrule
\multicolumn{7}{c}{\textbf{Extended Tree}} \\
\midrule
\makecell{Number of \\ eigenvectors} &
\textbf{Sample Acc.} &
\textbf{Deg. ($\downarrow$)} &
\textbf{Orbit ($\downarrow$)} &
\textbf{Cluster. ($\downarrow$)} &
\textbf{Spec. ($\downarrow$)} &
\textbf{V.U.N. ($\uparrow$)} \\
\midrule
4   & 0.4648 & 1.3$\spm{0.1}$ & 0.1$\spm{0.0}$ & \textbf{0.0}$\spm{0.0}$ & 6.1$\spm{0.4}$ & 42.3$\spm{3.2}$ \\
8   & 0.4688 & 0.7$\spm{0.0}$ & \textbf{0.0}$\spm{0.0}$ & \textbf{0.0}$\spm{0.0}$ & 5.1$\spm{0.6}$ & 44.1$\spm{1.3}$ \\
16  & 0.9727 & \textbf{0.1}$\spm{0.0}$ & \textbf{0.0}$\spm{0.0}$ & \textbf{0.0}$\spm{0.0}$ & \textbf{1.1}$\spm{0.1}$ & 92.3$\spm{1.2}$ \\
32  & \textbf{0.9883} & \textbf{0.1}$\spm{0.0}$ & \textbf{0.0}$\spm{0.0}$ & \textbf{0.0}$\spm{0.0}$ & 1.7$\spm{0.2}$ & \textbf{92.9}$\spm{1.3}$ \\
64  & 0.6680 & \textbf{0.1}$\spm{0.1}$ & 0.1$\spm{0.0}$ & \textbf{0.0}$\spm{0.0}$ & 1.6$\spm{0.2}$ & 49.9$\spm{2.8}$ \\

\bottomrule
\end{tabular}}
\label{tab:abl_k}
\vspace{-0.3cm}
\end{table}

\paragraph{Additional DAG dataset} In addition to \textsc{TPU Tile}, we evaluated LG-Flow on a synthetic dataset of DAGs generated using Price’s model, the DAG analog of the Barabási–Albert model. We compare our generation results against Directo in \Cref{tab:res_price}. Both models were sampled with a batch size of 256 for a fair evaluation of inference efficiency. LG-Flow performs on par with Directo on this dataset while yielding a $244\times$ inference speed-up.

\begin{table}[h!]
    \centering
    \caption{\textbf{Generation on DAGs: Price's dataset.}}
    \resizebox{0.9\linewidth}{!}{
    \begin{tabular}{l c c c c c c c c c c c}
    \toprule
        Method & \textbf{Out Deg.} & \textbf{In Deg.} & \textbf{Cluster.} & \textbf{Spec.} & \textbf{Wave.} & \textbf{V.U.N} & \cellcolor{gray!20}\textbf{Time}\\
    \midrule
    Directo    & 2.5$\spm{0.2}$ & 0.2$\spm{0.0}$ & 9.1$\spm{0.6}$ & 2.8$\spm{0.4}$ & 0.5$\spm{0.1}$ & 98.9$\spm{0.3}$ & \cellcolor{gray!20}\num{1.1e3} \\
    LG-Flow    & 2.7$\spm{0.4}$ & 0.3$\spm{0.1}$ & 12.6$\spm{5.5}$ & 5.5$\spm{4.2}$ & 0.6$\spm{0.5}$ & 98.8$\spm{0.7}$ & \cellcolor{gray!20}4.5$\spm{0.0}$ \\
    \bottomrule
    \end{tabular}}
    \label{tab:res_price}
\end{table}

\subsection{Experimental details on baselines}
For fair inference-time evaluation, we used a fixed batch size for all baselines; when this was not feasible, we tuned the batch size to optimize GPU usage. On \textsc{Extended Planar} and \textsc{Extended Tree}, all models were sampled with a batch size of 256. For Ego, discrete diffusion models could not fit the full 151-sample test batch on the GPU and were sampled in batches of 32. On GuacaMol and MOSES, all models were sampled with a batch size of 1000. On TPU Tile, we sampled both Directo and LG-Flow using a batch size of 40, i.e., the full test batch.

\end{document}